\def\Rset{\mathbb{R}}
\def\Sset{\mathbb{S}}
\def\dqed{\relax\tag*{\qed}}
\DeclareMathOperator*{\E}{\rm E}
\DeclareMathOperator*{\argmin}{\rm argmin}
\DeclareMathOperator{\supp}{supp}
\DeclareMathOperator{\diag}{diag}
\providecommand{\abs}[1]{\lvert#1\rvert}
\providecommand{\norm}[1]{\lVert#1\rVert}
\newtheorem{proposition}{Proposition}
\newcommand{\ipsfig}[2]{\scalebox{#1}{\psfig{#2}}}
\newcommand{\iprod}[2]{\left\langle #1 , #2 \right\rangle}
\newcommand{\set}[1]{\{#1\}}
\newcommand{\mat}[1]{{\mathbf #1}}
\newcommand{\dis}{\mathrm{disc}}
\renewcommand{\L}{{\cal L}}
\newcommand{\Q}{{\cal Q}}
\newcommand{\M}{\mat{M}}
\newcommand{\N}{\mat{N}}
\newcommand{\A}{\mat{A}}
\newcommand{\B}{\mat{B}}
\newcommand{\K}{\mat{K}}
\newcommand{\Y}{\mat{Y}}
\newcommand{\Z}{\mat{Z}}
\newcommand{\I}{\mat{I}}
\newcommand{\1}{\mat{1}}
\newcommand{\s}{\mat{s}}
\renewcommand{\u}{\mat{u}}
\newcommand{\x}{\mat{x}}
\newcommand{\w}{\mat{w}}
\newcommand{\z}{\mat{z}}
\renewcommand{\P}{\mat{\Phi}}
\newcommand{\R}{\mathfrak{R}}
\renewcommand{\S}{{\mathcal S}}
\newcommand{\T}{{\mathcal T}}
\newcommand{\tr}{\mathrm{{\bf tr}}}
\newcommand{\h}{\widehat}
\newcommand{\ggamma}{\boldsymbol{\gamma}}
\newenvironment{proof*}{\noindent{\bf Proof:}}{}
\newcommand{\ignore}[1]{}
\title{Domain Adaptation: Learning Bounds and Algorithms}
\author{Yishay Mansour\\
Google Research and
\\ Tel Aviv Univ.
\\\texttt{\small mansour@tau.ac.il}
\And Mehryar Mohri\\
Courant Institute and \\
Google Research \\
\texttt{\small mohri@cims.nyu.edu}
\And Afshin
Rostamizadeh\\
Courant Institute\\
New York University\\
\texttt{\small rostami@cs.nyu.edu}}
\begin{document}

\maketitle

\begin{abstract}
  This paper addresses the general problem of domain adaptation which
  arises in a variety of applications where the distribution of the
  labeled sample available somewhat differs from that of the test
  data.  Building on previous work by \emcite{bendavid}, we introduce
  a novel distance between distributions, \emph{discrepancy distance},
  that is tailored to adaptation problems with arbitrary loss
  functions. We give Rademacher complexity bounds for estimating the
  discrepancy distance from finite samples for different loss
  functions.  Using this distance, we derive novel generalization
  bounds for domain adaptation for a wide family of loss functions.
  We also present a series of novel adaptation bounds for large
  classes of regularization-based algorithms, including support vector
  machines and kernel ridge regression based on the empirical
  discrepancy. This motivates our analysis of the problem of
  minimizing the empirical discrepancy for various loss functions for
  which we also give novel algorithms. We report the results of
  preliminary experiments that demonstrate the benefits of our
  discrepancy minimization algorithms for domain adaptation.
\end{abstract}

\section{Introduction}

In the standard PAC model \cite{valiant} and other theoretical models
of learning, training and test instances are assumed to be drawn from
the same distribution. This is a natural assumption since, when the
training and test distributions substantially differ, there can be no
hope for generalization. However, in practice, there are several
crucial scenarios where the two distributions are more
similar and learning can be more effective. One such scenario
is that of \emph{domain adaptation}, the main topic of
our analysis.

The problem of domain adaptation arises in a variety of applications
in natural language processing
\cite{Dredze07Frustratingly,Blitzer07Biographies,jiang-zhai07,chelba,daume06},
speech processing
\cite{Legetter&Woodlang,Gauvain&Lee,DellaPietra,Rosenfeld96,jelinek,roark03supervised},
computer vision \cite{martinez}, and many other areas.  Quite often,
little or no labeled data is available from the \emph{target domain},
but labeled data from a \emph{source domain} somewhat similar to the
target as well as large amounts of unlabeled data from the target
domain are at one's disposal. The domain adaptation problem then
consists of leveraging the source labeled and target unlabeled data to
derive a hypothesis performing well on the target domain.

A number of different adaptation techniques have been introduced in
the past by the publications just mentioned and other similar work in
the context of specific applications. For example, a standard
technique used in statistical language modeling and other generative
models for part-of-speech tagging or parsing is based on the maximum a
posteriori adaptation which uses the source data as prior knowledge to
estimate the model parameters \cite{roark03supervised}.  Similar
techniques and other more refined ones have been used for training
maximum entropy models for language modeling or conditional models
\cite{DellaPietra,jelinek,chelba,daume06}.

The first theoretical analysis of the domain adaptation problem was
presented by \emcite{bendavid}, who gave VC-dimension-based
generalization bounds for adaptation in classification tasks. Perhaps,
the most significant contribution of this work was the definition and
application of a distance between distributions, the $d_A$ distance,
that is particularly relevant to the problem of domain adaptation and
that can be estimated from finite samples for a finite VC dimension,
as previously shown by \emcite{kifer}. This work was later extended by
\emcite{blitzer} who also gave a bound on the error rate of a
hypothesis derived from a weighted combination of the source data sets
for the specific case of empirical risk minimization.
A theoretical study of domain adaptation was presented by
\emcite{nips09}, where the analysis deals with the related but
distinct case of adaptation with multiple sources, and where the
target is a mixture of the source distributions.

This paper presents a novel theoretical and algorithmic analysis of
the problem of domain adaptation. It builds on the work of
\emcite{bendavid} and extends it in several ways. We introduce a novel
distance, the \emph{discrepancy distance}, that is tailored to
comparing distributions in adaptation. This distance coincides with
the $d_A$ distance for 0-1 classification, but it can be used to
compare distributions for more general tasks, including regression,
and with other loss functions. As already pointed out, a crucial
advantage of the $d_A$ distance is that it can be estimated from
finite samples when the set of regions used has finite
VC-dimension. We prove that the same holds for the discrepancy
distance and in fact give data-dependent versions of that statement
with sharper bounds based on the Rademacher complexity.

We give new generalization bounds for domain adaptation and point out
some of their benefits by comparing them with previous bounds.  We
further combine these with the properties of the discrepancy distance
to derive data-dependent Rademacher complexity learning bounds. We
also present a series of novel results for large classes of
regularization-based algorithms, including support vector machines
(SVMs) \cite{ccvv} and kernel ridge regression (KRR) \cite{krr}. We
compare the pointwise loss of the hypothesis returned by these
algorithms when trained on a sample drawn from the target domain
distribution, versus that of a hypothesis selected by these algorithms
when training on a sample drawn from the source distribution. We show
that the difference of these pointwise losses can be bounded by a term
that depends directly on the empirical discrepancy distance of the
source and target distributions.

These learning bounds motivate the idea of replacing the empirical
source distribution with another distribution with the same support
but with the smallest discrepancy with respect to the target empirical
distribution, which can be viewed as reweighting the loss on each
labeled point. We analyze the problem of determining the distribution
minimizing the discrepancy in both 0-1 classification and square loss
regression. We show how the problem can be cast as a linear program
(LP) for the 0-1 loss and derive a specific efficient combinatorial
algorithm to solve it in dimension one. We also give a polynomial-time
algorithm for solving this problem in the case of the square loss by
proving that it can be cast as a semi-definite program (SDP).
Finally, we report the results of preliminary experiments showing the
benefits of our analysis and discrepancy minimization algorithms.

In section~\ref{sec:prelim}, we describe the learning set-up for
domain adaptation and introduce the notation and Rademacher complexity
concepts needed for the presentation of our
results. Section~\ref{sec:dist} introduces the discrepancy distance
and analyzes its properties. Section~\ref{sec:bounds} presents our
generalization bounds and our theoretical guarantees for
regularization-based algorithms. Section~\ref{sec:alg} describes and
analyzes our discrepancy minimization
algorithms. Section~\ref{sec:exp} reports the results of our
preliminary experiments.

\section{Preliminaries}
\label{sec:prelim}

\subsection{Learning Set-Up}

We consider the familiar supervised learning setting where the
learning algorithm receives a sample of $m$ labeled points $\S = (z_1,
\ldots, z_m) = ((x_1, y_1), \ldots, (x_m, y_m)) \in (X \times Y)^m$,
where $X$ is the input space and $Y$ the label set, which is $\set{0,
  1}$ in classification and some measurable subset of $\Rset$ in
regression.

In the \emph{domain adaptation problem}, the training sample $\S$ is
drawn according to a \emph{source distribution} $Q$, while test points
are drawn according to a \emph{target distribution} $P$ that may
somewhat differ from $Q$. We denote by $f\colon X \to Y$ the target
labeling function. We shall also discuss cases where the source
labeling function $f_Q$ differs from the target domain labeling
function $f_P$. Clearly, this dissimilarity will need to be small for
adaptation to be possible.

We will assume that the learner is provided with an unlabeled sample
$\T$ drawn i.i.d.\ according to the target distribution $P$. We denote
by $L\colon Y \times Y \to \Rset$ a loss function defined over pairs
of labels and by $\L_Q(f, g)$ the expected loss for any two functions
$f, g\colon X \to Y$ and any distribution $Q$ over $X$: $\L_Q(f, g) =
\E_{x \sim Q} [L(f(x), g(x))]$.

The domain adaptation problem consists of selecting a hypothesis $h$
out of a hypothesis set $H$ with a small expected loss according to
the target distribution $P$, $\L_P(h, f)$.

\subsection{Rademacher Complexity}

Our generalization bounds will be based on the following
data-dependent measure of the complexity of a class of functions.

\begin{definition}[Rademacher Complexity]
  Let $H$ be a set of real-valued functions defined over a set
  $X$. Given a sample $S \!\in\! X^m$, the empirical Rademacher
  complexity of $H$ is defined as follows:
\begin{equation}
  \h \R_S(H) = \frac{2}{m} \E_\sigma \Big[\sup_{h \in H} \big|
    \sum_{i=1}^m \sigma_i h(x_i) \big| \, \Big| S = (x_1, \ldots, x_m) \Big].
\end{equation}
The expectation is taken over $\sigma = (\sigma_1, \ldots, \sigma_n)$
where $\sigma_i$s are independent uniform random variables taking
values in $\set{-1, +1}$. The Rademacher complexity of a hypothesis
set $H$ is defined as the expectation of $\h \R_S(H)$ over all samples
of size $m$:
\begin{equation}
	\R_m(H) = \E_S \big[ \h \R_S(H) \big| |S| =  m \big].
\end{equation}
\end{definition}
The Rademacher complexity measures the ability of a class of functions
to fit noise. The empirical Rademacher complexity has the added
advantage that it is data-dependent and can be measured from finite
samples. It can lead to tighter bounds than those based on other
measures of complexity such as the VC-dimension
\cite{koltchinskii_and_panchenko}.

We will denote by $\h R_S(h)$ the empirical average of a hypothesis $h
\colon X \to \Rset$ and by $R(h)$ its expectation over a sample $S$
drawn according to the distribution considered. The following is a
version of the Rademacher complexity bounds by
\emcite{koltchinskii_and_panchenko} and \emcite{bartlett}.  For
completeness, the full proof is given in the Appendix.

\begin{theorem}[Rademacher Bound]
\label{th:rademacher}
  Let $H$ be a class of functions mapping $Z = X \times Y$ to $[0, 1]$
  and $\S = (z_1, \ldots, z_m)$ a finite sample drawn i.i.d.\ according
  to a distribution $Q$. Then, for any $\delta > 0$, with probability
  at least $1 - \delta$ over samples $\S$ of size $m$, the following
  inequality holds for all $h \in H$:
\begin{equation}
R(h) \leq \h R(h) + \h \R_\S(H) + 3 \sqrt{\frac{\log \frac{2}{\delta}}{2m}}.
\end{equation}
\end{theorem}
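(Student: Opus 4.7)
The plan is to follow the standard two-step concentration plus symmetrization recipe, applying McDiarmid's bounded differences inequality once to the supremum of generalization gaps and once to the empirical Rademacher complexity itself, then linking them via a ghost-sample symmetrization. Writing out the steps will show that the constant $3$ in the bound decomposes as $1 + 2$, coming from the two McDiarmid applications with different bounded-difference constants.

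First, I would define $\Phi(\mathcal{S}) = \sup_{h \in H}\bigl(R(h) - \h R_{\mathcal{S}}(h)\bigr)$. Since each $h$ takes values in $[0,1]$, replacing a single $z_i$ in $\mathcal{S}$ changes $\h R_{\mathcal{S}}(h)$ by at most $1/m$ and hence changes $\Phi$ by at most $1/m$. McDiarmid's inequality then gives, with probability at least $1 - \delta/2$,
\begin{equation*}
\Phi(\mathcal{S}) \le \E[\Phi(\mathcal{S})] + \sqrt{\tfrac{\log(2/\delta)}{2m}}.
\end{equation*}

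Next, I would bound $\E[\Phi(\mathcal{S})]$ by a standard symmetrization. Introducing an i.i.d.\ ghost sample $\mathcal{S}'$, so that $R(h) = \E_{\mathcal{S}'}[\h R_{\mathcal{S}'}(h)]$, and pushing the sup inside an expectation gives
\begin{equation*}
\E[\Phi(\mathcal{S})] \le \E_{\mathcal{S},\mathcal{S}'}\Big[\sup_{h\in H}\bigl(\h R_{\mathcal{S}'}(h) - \h R_{\mathcal{S}}(h)\bigr)\Big].
\end{equation*}
Since $(z_i, z_i')$ are exchangeable, inserting independent Rademacher variables $\sigma_i$ leaves the joint distribution unchanged, and splitting the supremum via the triangle inequality (and using the symmetry of $\sigma_i$) bounds this by $\E_{\mathcal{S}}[\h\R_{\mathcal{S}}(H)] = \R_m(H)$. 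This is the only mildly delicate step, because the empirical Rademacher complexity in the definition is stated with absolute values; the symmetrization still goes through since $|\cdot|$ only enlarges the supremum.

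Finally, to turn $\R_m(H)$ into $\h\R_{\mathcal{S}}(H)$, I would apply McDiarmid a second time to $\mathcal{S} \mapsto \h\R_{\mathcal{S}}(H)$. Changing a single $x_i$ perturbs $\sup_h \bigl|\sum_j \sigma_j h(x_j)\bigr|$ by at most $1$, so the whole empirical Rademacher changes by at most $2/m$; McDiarmid therefore yields, with probability at least $1-\delta/2$,
\begin{equation*}
\R_m(H) \le \h\R_{\mathcal{S}}(H) + 2\sqrt{\tfrac{\log(2/\delta)}{2m}}.
\end{equation*}
A union bound combines the two high-probability events, and adding the $1/\sqrt{\cdot}$ terms produces the stated $3\sqrt{\log(2/\delta)/(2m)}$ slack. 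The main obstacle is conceptual rather than computational: making the symmetrization step clean when the empirical Rademacher carries absolute values inside the supremum, and carefully tracking the different bounded-difference constants ($1/m$ versus $2/m$) so that the final constant lands exactly on $3$.
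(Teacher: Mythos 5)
Your proposal is correct and follows essentially the same route as the paper's own proof in the Appendix: the same $\Phi(\S) = \sup_h (R(h) - \h R(h))$, the same first McDiarmid application with bounded difference $1/m$, the same ghost-sample symmetrization down to $\R_m(H)$, and the same second McDiarmid application to $\h\R_\S(H)$ with bounded difference $2/m$, yielding the $1+2=3$ decomposition of the constant. No gaps.
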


\section{Distances between Distributions}
\label{sec:dist}

Clearly, for generalization to be possible, the distribution $Q$ and
$P$ must not be too dissimilar, thus some measure of the similarity of
these distributions will be critical in the derivation of our
generalization bounds or the design of our algorithms. This section
discusses this question and introduces a \emph{discrepancy}
distance relevant to the context of adaptation.

The $l_1$ distance yields a straightforward bound on the difference of
the error of a hypothesis $h$ with respect to $Q$ versus its error
with respect to $P$.

\begin{proposition}
\label{prop:l_1_bound}
Assume that the loss $L$ is bounded, $L \leq M$ for some $M > 0$.
Then, for any hypothesis $h \in H$,
\begin{equation}
\abs{\L_Q(h, f) - \L_P(h, f)} \leq M \, l_1(Q, P).
\end{equation}
\end{proposition}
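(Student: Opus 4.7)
The plan is to unfold the definitions of $\L_Q(h,f)$ and $\L_P(h,f)$ as integrals against $Q$ and $P$, combine them into a single integral of $L(h(x),f(x))$ against the signed measure $Q-P$, and then bound $|L|$ pointwise by $M$ to pull $M$ out in front of the total variation, which is exactly $l_1(Q,P)$.

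Concretely, I would first write
\begin{equation*}
\L_Q(h,f) - \L_P(h,f) = \int_X L(h(x), f(x)) \, \big(dQ(x) - dP(x)\big).
\end{equation*}
Then I would apply the triangle inequality (for integrals) to get
\begin{equation*}
\abs{\L_Q(h,f) - \L_P(h,f)} \leq \int_X \abs{L(h(x), f(x))} \, \abs{dQ(x) - dP(x)}.
\end{equation*}
Using the assumption $L(h(x), f(x)) \leq M$ (and $L \geq 0$, which is implicit for a loss, though a two-sided bound $|L| \leq M$ suffices in general), I would factor out $M$ and recognize the remaining integral as the $l_1$ distance between $Q$ and $P$:
\begin{equation*}
\abs{\L_Q(h,f) - \L_P(h,f)} \leq M \int_X \abs{dQ(x) - dP(x)} = M \, l_1(Q,P).
\end{equation*}

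There is no real obstacle here; the statement is essentially a direct consequence of the definition of total variation distance together with the boundedness of the loss. The only minor care needed is to be clear about the convention for $l_1(Q,P)$ (i.e., $\int |dQ - dP|$ as opposed to $\tfrac{1}{2}\int|dQ-dP|$), since that governs whether the constant in front is $M$ or $2M$; I would adopt the convention consistent with the paper's notation so that the stated constant $M$ comes out correctly.
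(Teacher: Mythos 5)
Your proof is correct and follows essentially the same route as the paper's: both unfold the definitions, bound the difference of expectations by the pointwise bound $M$ on the loss times $\sum_{x}\abs{Q(x)-P(x)}$, and identify the latter with $l_1(Q,P)$ (the paper indeed uses the unhalved convention, so the constant $M$ is right). Nothing further is needed.
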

\ignore{
\begin{proof*}
By definition of $\L_Q(h, f)$ and $\L_P(h, f)$, for any $h$,
\begin{multline*}
\abs{\L_Q(h, f) - \L_P(h, f)} \leq  \\
\sum_{x \in X} \abs{Q(x) - P(x)} \max_{x \in X} \abs{L(h(x), f(x))}.\dqed
\end{multline*}
\end{proof*}
}

This provides us with a first adaptation bound suggesting that for
small values of the $l_1$ distance between the source and target
distributions, the average loss of hypothesis $h$ tested on the target
domain is close to its average loss on the source domain.
However, in general, this bound is not informative since the $l_1$
distance can be large even in favorable adaptation situations.
Instead, one can use a distance between distributions better suited
to the learning task.

Consider for example the case of classification with the 0-1 loss. Fix
$h \in H$, and let $a$ denote the support of $\abs{h - f}$. Observe
that $\abs{\L_Q(h, f) - \L_P(h, f)} = \abs{Q(a) - P(a)}$.  A natural
distance between distributions in this context is thus one based on
the supremum of the right-hand side over all regions $a$. Since the
target hypothesis $f$ is not known, the region $a$ should be taken as
the support of $|h - h'|$ for any two $h, h' \in H$.

This leads us to the following definition of a distance originally
introduced by \emcite{devroye}~[pp. 271-272] under the name of
\emph{generalized Kolmogorov-Smirnov distance}, later by
\emcite{kifer} as \emph{the $d_A$ distance}, and introduced and
applied to the analysis of adaptation in classification by
\emcite{bendavid} and \emcite{blitzer}.

\begin{definition}[$d_A$-Distance]
  Let $A \subseteq 2^{\abs{X}}$ be a set of subsets of $X$. Then, the
  \emph{$d_A$-distance} between two distributions $Q_1$ and $Q_2$ over
  $X$, is defined as
\begin{equation}
d_A(Q_1,Q_2) = \sup_{a \in A} \abs{Q_1(a) - Q_2(a)}.
\end{equation}
\end{definition}

As just discussed, in 0-1 classification, a natural choice for $A$ is
$A = H \Delta H = \set{\abs{h' - h}\colon h, h' \in H}$. We introduce
a distance between distributions, \emph{discrepancy distance}, that
can be used to compare distributions for more general tasks, e.g.,
regression. Our choice of the terminology is partly motivated by the
relationship of this notion with the discrepancy problems arising in
combinatorial contexts \cite{chazelle}.

\begin{definition}[Discrepancy Distance] Let $H$ be a set of functions
  mapping $X$ to $Y$ and let $L\colon Y \times Y \to \Rset_+$ define a
  loss function over $Y$. The discrepancy distance $\dis_L$ between
  two distributions $Q_1$ and $Q_2$ over $X$ is defined by
\begin{multline*}
\dis_L(Q_1, Q_2) = \max_{h, h' \in H} \Big| \L_{Q_1}(h', h) 
- \L_{Q_2}(h', h) \Big|.
\end{multline*}
\end{definition}
The discrepancy distance is clearly symmetric and it is not hard to
verify that it verifies the triangle inequality, regardless of the
loss function used. In general, however, it does not define a
\emph{distance}: we may have $\dis_L(Q_1, Q_2) = 0$ for $Q_1 \neq
Q_2$, even for non-trivial hypothesis sets such as that of bounded
linear functions and standard continuous loss functions.

Note that for the 0-1 classification loss, the discrepancy distance
coincides with the $d_A$ distance with $A = H \Delta H$. But the
discrepancy distance helps us compare distributions for other losses
such as $L_q(y, y' ) = |y - y' |^q$ for some $q$ and is more general.

As shown by \emcite{kifer}, an important advantage of the $d_A$
distance is that it can be estimated from finite samples when $A$ has
finite VC-dimension. We prove that the same holds for the $\dis_L$
distance and in fact give data-dependent versions of that statement
with sharper bounds based on the Rademacher complexity.

The following theorem shows that for a bounded loss function $L$, the
discrepancy distance $\dis_L$ between a distribution and its empirical
distribution can be bounded in terms of the empirical Rademacher
complexity of the class of functions $L_H = \set{x \mapsto L(h'(x),
  h(x))\colon h, h' \in H}$. In particular, when $L_H$ has finite
pseudo-dimension, this implies that the discrepancy distance converges
to zero as $O(\sqrt{\log m/m})$.

\begin{proposition}
  Assume that the loss function $L$ is bounded by $M > 0$. Let $Q$ be
  a distribution over $X$ and let $\h Q$ denote the corresponding
  empirical distribution for a sample $\S = (x_1, \ldots ,
  x_m)$. Then, for any $\delta > 0$, with probability at least $1 -
  \delta$ over samples $\S$ of size $m$ drawn according to $Q$:
\begin{equation}
  \dis_L(Q, \h Q) \leq \h \R_\S(L_H) + 3 M \sqrt{\frac{\log \frac{2}{\delta}}{2m}}.
\end{equation}
\end{proposition}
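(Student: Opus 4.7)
The plan is to recognize the discrepancy as a uniform deviation over the loss class and then invoke Theorem~\ref{th:rademacher}. Unfolding the definition,
\[
\dis_L(Q, \h Q) = \sup_{h, h' \in H} \Big|\E_{x \sim Q}[L(h'(x), h(x))] - \tfrac{1}{m}\sum_{i=1}^m L(h'(x_i), h(x_i))\Big| = \sup_{g \in L_H} \big|\E_Q[g] - \E_{\h Q}[g]\big|.
\]
So the quantity we wish to bound is exactly a supremum-of-deviations functional over the class $L_H = \{x \mapsto L(h'(x), h(x)) : h, h' \in H\}$, which is the kind of object that Rademacher complexity is designed to control.

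Next, I would handle the boundedness hypothesis by rescaling: since $0 \le L \le M$, every $g \in L_H/M$ maps $X$ into $[0,1]$, and by homogeneity of the empirical Rademacher complexity, $\h\R_\S(L_H/M) = \h\R_\S(L_H)/M$. Apply Theorem~\ref{th:rademacher} to the normalized class $L_H/M$ with the sample $\S$ drawn i.i.d.\ from $Q$. This yields, with probability at least $1 - \delta$, the one-sided bound
\[
\sup_{g \in L_H/M} \big(\E_Q[g] - \E_{\h Q}[g]\big) \;\le\; \h\R_\S(L_H/M) + 3\sqrt{\tfrac{\log(2/\delta)}{2m}}.
\]
Multiplying through by $M$ produces the right-hand side of the target inequality.

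The main subtlety is that the discrepancy involves an absolute value, whereas Theorem~\ref{th:rademacher} is stated as a one-sided bound. Two routes are available. The first is to note that the empirical Rademacher complexity defined in this paper uses $\big|\sum_i \sigma_i h(x_i)\big|$, so the underlying McDiarmid-plus-symmetrization argument in fact controls $\sup_g |\E_Q[g] - \E_{\h Q}[g]|$ rather than just the positive part; no extra penalty is incurred. The second is to apply Theorem~\ref{th:rademacher} separately to $L_H/M$ and to the class $\{(M - g)/M : g \in L_H\}$, which also lies in $[0,1]$ and has the same empirical Rademacher complexity (the shift by $M$ contributes a term $\E_\sigma|\sum_i \sigma_i M|$ which, after the $2/m$ normalization and symmetry of $\sigma$, does not change the supremum bound once one accounts for it; alternatively one bounds $\sup_g (\E_{\h Q}[g] - \E_Q[g])$ by applying the theorem to $-L_H/M$ inside the absolute-value Rademacher definition). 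Either route gives the two-sided inequality with the stated constants, and then taking the supremum over $g \in L_H$ on the left completes the proof.
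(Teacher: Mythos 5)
Your proof follows essentially the same route as the paper's: rescale $L_H$ by $M$ to land in $[0,1]$, apply Theorem~\ref{th:rademacher} to the class $L_H/M$, and use positive homogeneity of the empirical Rademacher complexity to convert $\h\R_\S(L_H/M)$ into $\h\R_\S(L_H)/M$. The one point where you go beyond the paper is your explicit treatment of the absolute value in $\dis_L(Q,\h Q)$ versus the one-sided statement of Theorem~\ref{th:rademacher}; your first route (the symmetrization argument with $\bigl|\sum_i \sigma_i h(x_i)\bigr|$ already controls the two-sided deviation with the same constants) is the correct and cleanest resolution of a step the paper's proof leaves implicit.
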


\begin{proof}
  We scale the loss $L$ to $[0, 1]$ by dividing by $M$, and denote the
  new class by $L_H /M$.  By Theorem~\ref{th:rademacher} applied to
  $L_H /M$, for any $\delta > 0$, with probability at least $1 -
  \delta$, the following inequality holds for all $h, h' \in H$:
\begin{equation*}
  \frac{\L_Q(h', h)}{M} \leq \frac{\L_{\h Q}(h', h)}{M} +
 \h \R_\S(L_H/M) + 3  \sqrt{\frac{\log \frac{2}{\delta}}{2m}}.
\end{equation*}
The empirical Rademacher complexity has the property that $\h\R(\alpha
H) = \alpha \h\R(H)$ for any hypothesis class $H$ and positive real
number $\alpha$ \cite{bartlett}. Thus, $\R_\S(L_H/M)
= \frac{1}{M}\R_\S(L_H)$, which proves the proposition.
\end{proof}

For the specific case of $L_q$ regression losses, the bound can
be made more explicit.

\begin{corollary}
\label{cor:dis_bound}
Let $H$ be a hypothesis set bounded by some $M \!>\! 0$ for the loss
function $L_q$: $L_q(h, h') \leq M$, for all $h, h' \in H$. Let $Q$ be
a distribution over $X$ and let $\h Q$ denote the corresponding
empirical distribution for a sample $\S = (x_1, \ldots , x_m)$. Then,
for any $\delta > 0$, with probability at least $1 - \delta$ over
samples $\S$ of size $m$ drawn according to $Q$:
\begin{equation}
  \dis_{L_q}(Q, \h Q) \leq 4q \h\R_\S(H) + 3 M \sqrt{\frac{\log \frac{2}{\delta}}{2m}}.
\end{equation}
\end{corollary}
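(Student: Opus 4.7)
The plan is to combine the previous proposition with a Rademacher contraction argument. By that proposition applied to $L = L_q$ with bound $M$, with probability at least $1-\delta$,
\begin{equation*}
\dis_{L_q}(Q, \h Q) \leq \h\R_\S(L_{q,H}) + 3M\sqrt{\tfrac{\log(2/\delta)}{2m}},
\end{equation*}
so the entire task reduces to showing that $\h\R_\S(L_{q,H}) \leq 4q\,\h\R_\S(H)$, where $L_{q,H} = \{x \mapsto |h'(x) - h(x)|^q : h, h' \in H\}$.

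The first step is to peel off the outer function $\phi(u) = |u|^q$ via Talagrand's contraction lemma. Under the assumption $L_q(h,h') \le M$, the relevant values $u = h'(x) - h(x)$ lie in a bounded range on which $\phi$ is Lipschitz with constant $q$ (with respect to the natural normalization implicit in $M$, since $\phi'(u) = q|u|^{q-1}$ and $|u|\le 1$ on that range). Contraction then gives
\begin{equation*}
\h\R_\S(L_{q,H}) \le 2q\,\h\R_\S\bigl(\{x \mapsto h'(x) - h(x) : h, h' \in H\}\bigr).
\end{equation*}

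The second step is to bound the Rademacher complexity of pairwise differences by $2\h\R_\S(H)$, using subadditivity of the empirical Rademacher complexity together with the invariance $\h\R_\S(-H) = \h\R_\S(H)$ (the Rademacher variables are symmetric). Multiplying the two factors of $2$ with the Lipschitz constant $q$ yields $\h\R_\S(L_{q,H}) \le 4q\,\h\R_\S(H)$, and substituting back produces the stated inequality.

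The main obstacle will be the careful invocation of Talagrand's lemma: the empirical Rademacher complexity is defined with an absolute value around the Rademacher sum, which introduces the factor of $2$ in the contraction step, and the class $L_{q,H}$ is indexed by \emph{pairs} $(h,h')$ rather than a single hypothesis, so one must be precise about applying contraction to the map $u \mapsto |u|^q$ composed with the class of differences $h' - h$ rather than with $H$ itself. Once the Lipschitz constant on the relevant range is correctly identified as $q$ under the implicit normalization of the statement, the rest is routine.
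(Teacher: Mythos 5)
Your proposal is correct and follows essentially the same route as the paper's own proof: invoke the preceding proposition to reduce to bounding $\h\R_\S(L_H)$, apply Talagrand's contraction lemma to $u \mapsto |u|^q$ (picking up the factor $2q$), and bound the Rademacher complexity of the difference class $\set{h' - h}$ by $2\h\R_\S(H)$ via subadditivity. The normalization caveat you flag (the Lipschitz constant $q$ holding on $[0,1]$) is handled in the paper with the same implicit rescaling, so there is no substantive difference.
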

\begin{proof}
The function $f\colon x \mapsto x^q$ is $q$-Lipschitz for $x \in [0, 1]$:
\begin{equation}
|f(x') - f(x)| \leq q |x' - x|,
\end{equation}
and $f(0) = 0$. For $L = L_q$, $L_H = \set{x \mapsto \abs{h'(x) -
    h(x)}^q \colon h, h' \in H}$. Thus, by Talagrand's contraction
lemma \cite{talagrand}, $\h \R(L_H)$ is bounded by $2q \h \R(H')$ with
$H' \!=\! \set{x \mapsto (h'(x) - h(x))\colon h, h' \in H}$.
Then, $\h \R_\S(H')
$ can be written and bounded as follows
\begin{multline*}
\h \R_\S(H')  = \E_\sigma \bigl[
\sup_{h, h'} \frac{1}{m} |\sum_{i = 1}^m \sigma_i (h(x_i) - h'(x_i))|\bigr] \\
{\begin{aligned}
& \leq \E_\sigma [ \sup_{h} \frac{1}{m} |\sum_{i = 1}^m \sigma_i
h(x_i)| ] + \E_\sigma [ \sup_{h'} \frac{1}{m} |\sum_{i = 1}^m \sigma_i
h'(x_i)| ] \\
& = 2 \h \R_\S(H),
\end{aligned}}
\end{multline*}
using the definition of the Rademacher variables and the
sub-additivity of the supremum function. This proves the inequality
$\h\R(L_H) \leq 4q \h\R(h)$ and the corollary.
\end{proof}

A very similar proof gives the following result for classification.

\begin{corollary}
\label{cor:dis_class}
Let $H$ be a set of classifiers mapping $X$ to $\set{0, 1}$ and let
$L_{01}$ denote the 0-1 loss. Then, with the notation of
Corollary~\ref{cor:dis_bound}, for any $\delta > 0$, with probability
at least $1 - \delta$ over samples $\S$ of size $m$ drawn according to
$Q$:
\begin{equation}
  \dis_{L_{01}}(Q, \h Q) \leq 4 \h\R_\S(H) + 3 \sqrt{\frac{\log \frac{2}{\delta}}{2m}}.
\end{equation}
\end{corollary}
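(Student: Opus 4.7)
The plan is to mirror the proof of Corollary~\ref{cor:dis_bound} almost verbatim, exploiting the fact that when $H$ consists of $\set{0,1}$-valued functions, the 0-1 loss coincides with the absolute-value loss: for all $y, y' \in \set{0,1}$, $L_{01}(y, y') = \1_{y \neq y'} = |y - y'|$. Hence $L_H = \set{x \mapsto |h'(x) - h(x)| \colon h, h' \in H\,}$, which is exactly the $q=1$ instance of the class appearing in Corollary~\ref{cor:dis_bound}. Also, $L_{01}$ is bounded by $M = 1$.

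First, I would invoke the preceding Proposition (the concentration bound for $\dis_L(Q, \h Q)$) with $M = 1$, which yields, with probability at least $1 - \delta$,
\begin{equation*}
\dis_{L_{01}}(Q, \h Q) \leq \h \R_\S(L_H) + 3 \sqrt{\frac{\log \frac{2}{\delta}}{2m}}.
\end{equation*}
It then remains to bound $\h \R_\S(L_H)$ in terms of $\h \R_\S(H)$.

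Second, since the map $t \mapsto |t|$ is $1$-Lipschitz and vanishes at $0$, Talagrand's contraction lemma applied to the class $H' = \set{x \mapsto h'(x) - h(x) \colon h, h' \in H}$ gives $\h \R_\S(L_H) \leq 2 \h \R_\S(H')$. Then, exactly as in the proof of Corollary~\ref{cor:dis_bound}, splitting the supremum of the sum into the sum of suprema (sub-additivity) and using that the distribution of $\sigma_i$ and $-\sigma_i$ coincide, one obtains $\h \R_\S(H') \leq 2 \h \R_\S(H)$. Chaining the two inequalities yields $\h \R_\S(L_H) \leq 4 \h \R_\S(H)$, and combining with the concentration bound above gives the stated result.

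There is no real obstacle here: the argument is a direct specialization of the $L_q$ argument to $q = 1$ with $M = 1$, the only non-cosmetic point being the observation that on $\set{0,1}^2$ the 0-1 loss equals the absolute value, so that Talagrand's contraction can be applied with Lipschitz constant $1$ rather than $q$.
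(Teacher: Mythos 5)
Your proof is correct and is exactly the argument the paper intends: the paper gives no separate proof for this corollary, stating only that ``a very similar proof'' to that of Corollary~\ref{cor:dis_bound} applies, and your specialization to $q=1$, $M=1$ via the identity $L_{01}(y,y')=|y-y'|$ on $\set{0,1}^2$ is precisely that proof. Nothing is missing.
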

The factor of $4$ can in fact be reduced to $2$ in these corollaries
when using a more favorable constant in the contraction lemma. The
following corollary shows that the discrepancy distance can be
estimated from finite samples.
\begin{corollary}
\label{cor:dis_bound_emp}
Let $H$ be a hypothesis set bounded by some $M \!>\! 0$ for the loss
function $L_q$: $L_q(h, h') \!\leq\! M$, for all $h, h' \!\in\!  H$.
Let $Q$ be a distribution over $X$ and $\h Q$ the corresponding
empirical distribution for a sample $\S$, and let $P$ be a
distribution over $X$ and $\h P$ the corresponding empirical
distribution for a sample $\T$. Then, for any $\delta > 0$, with
probability at least $1 - \delta$ over samples $\S$ of size $m$ drawn
according to $Q$ and samples $\T$ of size $n$ drawn according to $P$:
\begin{multline*}
  \dis_{L_q}(P, Q) \leq \dis_{L_q}(\h P, \h Q) +\\
\qquad 4q \Big(\h\R_\S(H) + \h\R_\T(H)\Big) +
  3 M \Bigg(\sqrt{\frac{\log \frac{4}{\delta}}{2m}} + \sqrt{\frac{\log
      \frac{4}{\delta}}{2n}}\Bigg).
\end{multline*}
\end{corollary}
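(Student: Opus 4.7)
The plan is to combine the triangle inequality for the discrepancy distance with two applications of Corollary~\ref{cor:dis_bound}, one for the source and one for the target, and then to tie them together with a union bound. Since the text already notes that $\dis_L$ satisfies the triangle inequality for any loss $L$, I would start by writing
\begin{equation*}
\dis_{L_q}(P, Q) \;\leq\; \dis_{L_q}(P, \h P) \,+\, \dis_{L_q}(\h P, \h Q) \,+\, \dis_{L_q}(\h Q, Q),
\end{equation*}
which isolates the empirical discrepancy term $\dis_{L_q}(\h P, \h Q)$ appearing in the statement and leaves us with the two ``sampling'' terms $\dis_{L_q}(P,\h P)$ and $\dis_{L_q}(Q,\h Q)$ to control.

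Next I would apply Corollary~\ref{cor:dis_bound} separately to each of those two sampling terms. For the source, with probability at least $1 - \delta/2$ over $\S \sim Q^m$,
\begin{equation*}
  \dis_{L_q}(Q, \h Q) \;\leq\; 4q\, \h\R_\S(H) + 3M \sqrt{\tfrac{\log(4/\delta)}{2m}},
\end{equation*}
and analogously, with probability at least $1 - \delta/2$ over $\T \sim P^n$,
\begin{equation*}
  \dis_{L_q}(P, \h P) \;\leq\; 4q\, \h\R_\T(H) + 3M \sqrt{\tfrac{\log(4/\delta)}{2n}}.
\end{equation*}
The substitution $\delta \to \delta/2$ inside Corollary~\ref{cor:dis_bound} is what turns its $\log(2/\delta)$ into the $\log(4/\delta)$ appearing in the target inequality.

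Finally, a union bound over the two events of probability $1-\delta/2$ each gives that both inequalities hold simultaneously with probability at least $1-\delta$, and I would then plug them into the triangle inequality above and collect terms to recover the stated bound. No real obstacle is expected: the triangle inequality is already granted, Corollary~\ref{cor:dis_bound} supplies the exact sample-to-distribution rates we need, and the only bookkeeping is the $\delta/2$ rescaling in the union bound. The one thing to watch is that the Rademacher terms $\h\R_\S(H)$ and $\h\R_\T(H)$ come from two independent samples, so their appearances in the source and target bounds add rather than combine, matching the $\h\R_\S(H)+\h\R_\T(H)$ in the statement.
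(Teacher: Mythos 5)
Your proposal is correct and follows essentially the same route as the paper's own proof: the triangle inequality for $\dis_{L_q}$ to isolate $\dis_{L_q}(\h P, \h Q)$, followed by two applications of Corollary~\ref{cor:dis_bound} to the sampling terms. Your explicit handling of the $\delta \to \delta/2$ rescaling and the union bound simply spells out details the paper leaves implicit.
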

\begin{proof}
  By the triangle inequality, we can write
\begin{multline}
  \dis_{L_q}(P, Q) \leq \dis_{L_q}(P, \h P) + \dis_{L_q}(\h P, \h Q) +\\
\dis_{L_q}(Q, \h Q).
\end{multline}
The result then follows by the application of Corollary~\ref{cor:dis_bound}
to $\dis_{L_q}(P, \h P)$ and $\dis_{L_q}(Q, \h Q)$.
\end{proof}

As with Corollary~\ref{cor:dis_class}, a similar result holds for the 0-1 loss
in classification.

\section{Domain Adaptation: Generalization Bounds}
\label{sec:bounds}

This section presents generalization bounds for domain adaptation
given in terms of the discrepancy distance just defined. In the
context of adaptation, two types of questions arise:
\begin{enumerate}
\item[(1)] we may ask, as for standard generalization, how the average
  loss of a hypothesis on the target distribution, $\L_P(h, f)$,
  differs from $\L_{\h Q}(h, f)$, its empirical error based on the
  empirical distribution $\h Q$;

\item[(2)] another natural question is, given a specific learning
  algorithm, by how much does $\L_P(h_Q, f)$ deviate from $\L_P(h_P,
  f)$ where $h_Q$ is the hypothesis returned by the algorithm when
  trained on a sample drawn from $Q$ and $h_P$ the one it would have
  returned by training on a sample drawn from the true target
  distribution $P$.
\end{enumerate}
We will present theoretical guarantees addressing both questions.

\subsection{Generalization bounds}

Let $h_Q^* \in \argmin_{h \in H} \L_Q(h, f_Q)$ and similarly let
$h_P^*$ be a minimizer of $\L_P(h, f_P)$. Note that these minimizers
may not be unique. For adaptation to succeed, it is natural to assume
that the average loss $\L_Q(h_Q^*, h_P^*)$ between the best-in-class
hypotheses is small. Under that assumption and for a small discrepancy
distance, the following theorem provides a useful bound on the error
of a hypothesis with respect to the target domain.

\begin{theorem}
\label{th:gen_bound}
Assume that the loss function $L$ is symmetric and obeys the triangle
inequality. Then, for any hypothesis $h \in H$, the following holds
\begin{multline}
\L_P(h, f_P) \leq \L_P(h_P^*, f_P) + \L_Q(h, h_Q^*) + \dis_L(P, Q) \\
  + \min\{\L_Q(h_Q^*, h_P^*), \L_P(h_Q^*, h_P^*)\}.
\label{eq:gen_bound}
\end{multline}
\end{theorem}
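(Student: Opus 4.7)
The plan is to chain the triangle inequality for $L$ with the defining property of the discrepancy distance, and the main subtlety will be to order these two steps so that only a single $\dis(P,Q)$ term is incurred. Since $L$ is symmetric and satisfies the triangle inequality pointwise, so does $\L_D(\cdot,\cdot)$ for every distribution $D$, by integrating the pointwise inequality against $D$.

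First, I would peel off $h_P^*$ on the target side using the triangle inequality under $P$:
\begin{equation*}
\L_P(h, f_P) \leq \L_P(h, h_P^*) + \L_P(h_P^*, f_P),
\end{equation*}
which already produces the term $\L_P(h_P^*, f_P)$ appearing in the claim.

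The key step, which I expect to be the main obstacle, is to bound $\L_P(h, h_P^*)$ by $Q$-expectations while paying only one discrepancy charge. The natural temptation is to insert $h_Q^*$ first under $P$ (writing $\L_P(h, h_P^*) \leq \L_P(h, h_Q^*) + \L_P(h_Q^*, h_P^*)$) and then convert each of the two $P$-losses into $Q$-losses via the discrepancy; this would cost $2\dis(P,Q)$. To avoid this, I would instead swap the distribution \emph{first}: since $h, h_P^* \in H$, the definition of $\dis$ gives
\begin{equation*}
\L_P(h, h_P^*) \leq \L_Q(h, h_P^*) + \dis(P, Q),
\end{equation*}
incurring the discrepancy term only once. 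Only \emph{then} would I split at $h_Q^*$, but now under $Q$, using the triangle inequality:
\begin{equation*}
\L_Q(h, h_P^*) \leq \L_Q(h, h_Q^*) + \L_Q(h_Q^*, h_P^*).
\end{equation*}

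Substituting this chain back into the first inequality yields
\begin{equation*}
\L_P(h, f_P) \leq \L_P(h_P^*, f_P) + \L_Q(h, h_Q^*) + \dis(P, Q) + \L_Q(h_Q^*, h_P^*),
\end{equation*}
which is exactly the stated bound. The symmetry of $L$ is used implicitly to match the orientation of the arguments in the terms (e.g.\ $\L_Q(h_Q^*, h_P^*)$ versus $\L_Q(h_P^*, h_Q^*)$), and everything else is a routine consequence of the triangle inequality together with the single use of the defining property of $\dis$.
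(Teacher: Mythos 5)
Your proof is correct, and every step is justified: the triangle inequality for $\L_D$ follows from the pointwise one by integration, the discrepancy step is legitimate because both $h$ and $h_P^*$ belong to $H$, and the three inequalities sum to exactly the stated bound. The route is, however, genuinely different from the paper's in the order of operations, and the difference matters. The paper first performs the full decomposition under $P$, writing $\L_P(h, f_P) \leq \L_P(h, h_Q^*) + \L_P(h_Q^*, h_P^*) + \L_P(h_P^*, f_P)$, and only then converts the single term $\L_P(h, h_Q^*)$ into $\L_Q(h, h_Q^*) + \dis(P,Q)$; this leaves the cross term measured under $P$, i.e.\ it establishes the bound with $\L_P(h_Q^*, h_P^*)$ rather than the $\L_Q(h_Q^*, h_P^*)$ appearing in the theorem statement (and in the surrounding discussion, which assumes the $Q$-version is small). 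You instead swap the distribution on $\L_P(h, h_P^*)$ first and split at $h_Q^*$ afterwards, under $Q$, so the cross term comes out as $\L_Q(h_Q^*, h_P^*)$ and your derivation matches the statement verbatim. Both arguments pay the discrepancy only once, and the two resulting bounds are in general incomparable; your version is the one actually claimed, so if anything your ordering is the one the proof should use. Your remark about the symmetry of $L$ being needed only to orient the arguments is also accurate.
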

\begin{proof*}
  We show two inequalities, the combination of which proves the
  theorem. Fix $h \in H$. By the triangle inequality property
  of $L$ and the definition of the discrepancy $\dis_L(P, Q)$,
  the following holds
\begin{align*}
\L_P(h, f_P)
& \leq \L_P(h, h_Q^*) + \L_P(h_Q^*, h_P^*) + \L_P(h_P^*, f_P)\\
& \leq \L_Q(h, h_Q^*) + \dis_L(P, Q) + \L_P(h_Q^*, h_P^*) \\
& \quad + \L_P(h_P^*, f_P).
\end{align*}
  Similarly, using same arguments, we have
\begin{align*}
\L_P(h, f_P)
& \leq \L_P(h, h_P^*) + \L_P(h_P^*, f_P)\\
& \leq \L_Q(h, h_P^*) + \dis_L(P, Q) + \L_P(h_P^*, f_P)\\
& \leq \L_Q(h, h_Q^*) + \L_Q(h_Q^*, h_P^*) + \dis_L(P, Q)  \\
& \quad  + \L_P(h_P^*, f_P). \dqed
\end{align*}
\end{proof*}
We compare (\ref{eq:gen_bound}) with the main adaptation bound given
by \emcite{bendavid} and \emcite{blitzer}:
\begin{multline}
  \L_P(h, f_P) \leq \L_Q(h, f_Q) + \dis_L(P, Q) + \\
\min_{h \in H} \big(\L_Q(h, f_Q) + \L_P(h, f_P) \big).
\label{bound-old-adap}
\end{multline}
It is very instructive to compare the two bounds.  Intuitively, the
bound of Theorem \ref{th:gen_bound} has only one error term that
involves the target function, while the bound of
(\ref{bound-old-adap}) has three terms involving the target function.
One extreme case is when there is a single hypothesis $h$ in $H$ and a
single target function $f$.  In this case, Theorem~\ref{th:gen_bound}
gives a bound of $\L_P(h, f) + \dis(P, Q) $, while the bound supplied
by (\ref{bound-old-adap}) is $2\L_Q(h, f) + \L_P(h, f) + \dis(P, Q)$,
which is larger than $3 \L_P(h, f) + \dis(P, Q)$ when $\L_Q(h, f) \leq
\L_P(h, f)$.  One can even see that the bound of
(\ref{bound-old-adap}) might become vacuous for moderate values of
$\L_Q(h, f)$ and $\L_P(h, f)$.  While this is clearly an extreme case,
an error with a factor of 3 can arise in more realistic situations,
especially when the distance between the target function and the
hypothesis class is significant.

While in general the two bounds are incomparable, it is worthwhile to
compare them using some relatively plausible assumptions.
Assume that the discrepancy distance between $P$ and $Q$ is small and
so is the average loss between $h_Q^*$ and $h_P^*$. These are natural
assumptions for adaptation to be possible. Then,
Theorem~\ref{th:gen_bound} indicates that the regret $\L_P(h, f_P) -
\L_P(h_P^*, f_P)$ is essentially bounded by $\L_Q(h, h_Q^*)$, the
average loss with respect to $h_Q^*$ on $Q$.  We now consider several
special cases of interest.

\begin{enumerate}
\item[(i)] When $h_Q^* = h_P^*$ then $h^* = h_Q^* = h_P^*$ and the bound of Theorem~\ref{th:gen_bound} becomes
\begin{equation}
\label{eq:bound1}
\L_P(h, f_P) \leq \L_P(h^*, f_P) + \L_Q(h, h^*) + \dis(P, Q).
\end{equation}
The bound of  (\ref{bound-old-adap}) becomes
\begin{multline*}
  \L_P(h, f_P) \leq \L_P(h^*, f_P) + \L_Q(h, f_Q) + \\\L_Q(h^*, f_Q) + \dis(P, Q),
\end{multline*}
where the right-hand side essentially includes the sum of $3$ errors
and is always larger than the right-hand side of (\ref{eq:bound1})
since by the triangle inequality $\L_Q(h, h^*) \leq \L_Q(h, f_Q)$ $ +
\L_Q(h^*, f_Q)$.
\item[(ii)] When $h_Q^* = h_P^*=h^* \wedge \dis(P, Q) = 0$, the bound of
  Theorem~\ref{th:gen_bound} becomes
\begin{equation*}
\L_P(h, f_P) \leq \L_P(h^*, f_P) + \L_Q(h, h^*),
\end{equation*}
which coincides with the standard generalization bound.
The bound of  (\ref{bound-old-adap}) does
not coincide with the standard bound and leads to:
\begin{equation*}
  \L_P(h, f_P) \leq \L_P(h^*, f_P) + \L_Q(h, f_Q) + \L_Q(h^*, f_Q).
\end{equation*}
\item[(iii)] When $f_P \!\in\! H$ (consistent case), the bound of (\ref{bound-old-adap}) simplifies to,
\begin{equation*}
\abs{\L_P(h, f_P) - \L_Q(h, f_P)} \leq \dis_L(Q, P),
\end{equation*}
and it can also be derived using the proof of Theorem \ref{th:gen_bound}.
\end{enumerate}
Finally, clearly Theorem~\ref{th:gen_bound} leads to bounds based on
the empirical error of $h$ on a sample drawn according to $Q$. We give
the bound related to the 0-1 loss, others can be derived in a similar
way from Corollaries~\ref{cor:dis_bound}-\ref{cor:dis_bound_emp} and
other similar corollaries. The result follows
Theorem~\ref{th:gen_bound} combined with
Corollary~\ref{cor:dis_bound_emp}, and a standard Rademacher
classification bound
(Theorem~\ref{th:rademacher_classification}) \cite{bartlett}.

\begin{theorem}
\label{th:gen_bound_emp}
Let $H$ be a family of functions mapping $X$ to $\set{0, 1}$ and let
the rest of the assumptions be as in
Corollary~\ref{cor:dis_bound_emp}. Then, for any hypothesis $h \in H$,
with probability at least $1 - \delta$, the following adaptation
generalization bound holds for the 0-1 loss:
\begin{multline}
\L_P(h, f_P) - \L_P(h_P^*, f_P) \leq \\
\L_{\h Q}(h, h_Q^*) +
\dis_{L_{01}}(\h P, \h Q) +
(4q + \frac{1}{2}) \h\R_\S(H) + 4 q \h\R_\T(H) + \\
  4 \sqrt{\frac{\log \frac{8}{\delta}}{2m}} + 3 \sqrt{\frac{\log
      \frac{8}{\delta}}{2n}} + \L_Q(h_Q^*, h_P^*).
\end{multline}
\end{theorem}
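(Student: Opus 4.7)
The plan is to start from the population bound of Theorem~\ref{th:gen_bound}, instantiated with $L = L_{01}$, and to replace each of the two data-dependent quantities on its right-hand side by an empirical version: the source average $\L_Q(h, h_Q^*)$ via a standard Rademacher classification bound, and the population discrepancy $\dis_{L_{01}}(P, Q)$ via the 0-1 analog of Corollary~\ref{cor:dis_bound_emp}. A union bound over the two resulting high-probability events, each invoked at confidence $1 - \delta/2$, then yields the claim with probability $1 - \delta$.

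Concretely, Theorem~\ref{th:gen_bound} gives the deterministic inequality
\begin{equation*}
\L_P(h, f_P) - \L_P(h_P^*, f_P) \leq \L_Q(h, h_Q^*) + \dis_{L_{01}}(P, Q) + \L_Q(h_Q^*, h_P^*),
\end{equation*}
whose last summand already appears in the stated bound. To convert $\L_Q(h, h_Q^*)$ into $\L_{\h Q}(h, h_Q^*)$, note that $h_Q^*$ is a fixed function (it depends on $Q$ and $f_Q$ alone, not on the random sample $\S$), so $\{x \mapsto L_{01}(h(x), h_Q^*(x)) : h \in H\}$ is a standard 0-1 loss class indexed by $h \in H$ whose empirical Rademacher complexity equals $\tfrac{1}{2}\,\h\R_\S(H)$ by the classical identity for the 0-1 loss. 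Applying Theorem~\ref{th:rademacher_classification} uniformly over $h \in H$ at confidence $1 - \delta/2$ then bounds $\L_Q(h, h_Q^*)$ by $\L_{\h Q}(h, h_Q^*) + \tfrac{1}{2}\,\h\R_\S(H)$ plus a $\sqrt{\log(1/\delta)/m}$ tail.

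Next, the 0-1 analog of Corollary~\ref{cor:dis_bound_emp} (obtained exactly as in its proof, using Corollary~\ref{cor:dis_class} in place of Corollary~\ref{cor:dis_bound}) yields, with confidence $1 - \delta/2$,
\begin{equation*}
\dis_{L_{01}}(P, Q) \leq \dis_{L_{01}}(\h P, \h Q) + 4 \bigl( \h\R_\S(H) + \h\R_\T(H) \bigr) + 3\sqrt{\tfrac{\log(8/\delta)}{2m}} + 3\sqrt{\tfrac{\log(8/\delta)}{2n}}.
\end{equation*}
A union bound combines the two events at total failure probability at most $\delta$. Substituting both inequalities into the display above and gathering the Rademacher coefficients ($\tfrac{1}{2}$ from the classification step and $4$ from the discrepancy step on the source, and $4$ from the discrepancy step on the target) produces the stated $(4q + \tfrac{1}{2})\,\h\R_\S(H) + 4q\,\h\R_\T(H)$ with $q = 1$, together with the square-root tails.

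The main obstacle is bookkeeping rather than analysis: in the classification step one must keep $h_Q^*$ fixed, so that no additional uniformity (which would inflate the complexity term into something like $\h\R_\S(H)$ for a product class) is introduced; and in the final combination the $\log(4/\delta)$ tail from the classification bound is harmlessly loosened to $\log(8/\delta)$ for notational symmetry with the two discrepancy tails. The $q$ appearing in the theorem statement is a residue of the general $L_q$ formulation of Corollary~\ref{cor:dis_bound_emp} and should be read as $q = 1$ throughout the 0-1 case.
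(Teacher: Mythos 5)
Your proposal is correct and follows exactly the route the paper intends: it explicitly states that the theorem "follows Theorem~\ref{th:gen_bound} combined with Corollary~\ref{cor:dis_bound_emp}, and a standard Rademacher classification bound (Theorem~\ref{th:rademacher_classification})," and your decomposition, union bound at confidence $\delta/2$ each, and accounting of the Rademacher coefficients ($\tfrac{1}{2}+4$ on $\S$, $4$ on $\T$) and of the $\sqrt{\log(8/\delta)}$ tails reproduce that combination faithfully. The only nit is a trivial bookkeeping slip in naming the classification tail ($\log(2/\delta)$ after the split, not $\log(4/\delta)$), which is harmless since it is loosened to $\log(8/\delta)$ anyway.
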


\subsection{Guarantees for regularization-based algorithms}

In this section, we first assume that the hypothesis set $H$ includes
the target function $f_P$. Note that this does not imply that $f_Q$ is
in $H$. Even when $f_P$ and $f_Q$ are restrictions to $\supp(P)$ and
$\supp(Q)$ of the same labeling function $f$, we may have $f_P \in H$
and $f_Q \not \in H$ and the source problem could be non-realizable.
Figure~\ref{fig:consistent} illustrates this situation.

\begin{figure}[t]
\begin{center}
\ipsfig{.4}{figure=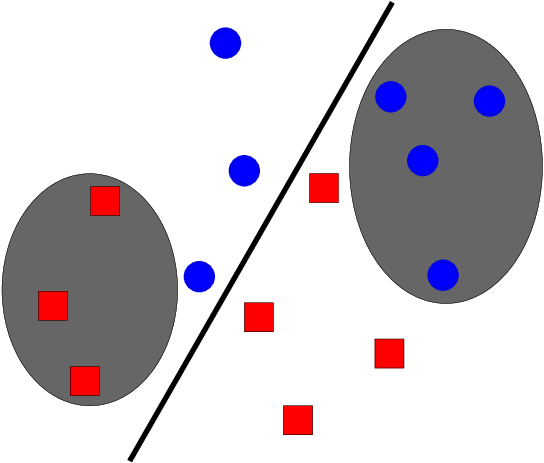}
\end{center}
\vspace{-.5cm}
\caption{In this example, the gray regions are assumed to have zero
  support in the target distribution $P$. Thus, there exist
  consistent hypotheses such as the linear separator
  displayed. However, for the source distribution $Q$ no linear
  separation is possible.}
\label{fig:consistent}
\vspace{-.5cm}
\end{figure}

For a fixed loss function $L$, we denote by $R_{\h Q}(h)$ the
empirical error of a hypothesis $h$ with respect to an empirical
distribution $\h Q$: $R_{\h Q}(h) = \L_{\h Q}(h, f)$.
Let $N\colon H \to \Rset_+$ be a function defined over the hypothesis
set $H$. We will assume that $H$ is a convex subset of a vector space
and that the loss function $L$ is convex with respect to each of its
arguments. Regularization-based algorithms minimize an objective of
the form
\begin{equation}
F_{\h Q}(h) = \h R_{\h Q}(h) + \lambda N(h),
\end{equation}
where $\lambda \geq 0$ is a trade-off parameter. This family of
algorithms includes support vector machines (SVM) \cite{ccvv}, support
vector regression (SVR) \cite{vapnik98}, kernel ridge regression
\cite{krr}, and other algorithms such as those based on the relative
entropy regularization \cite{bousquet-jmlr}.

We denote by $B_F$ the Bregman divergence associated to a convex
function $F$,
\begin{equation}
B_F(f \Arrowvert g) = F(f) - F(g) - \iprod{f - g}{\nabla
  F(g)}
\end{equation}
and define $\Delta h$ as $\Delta h = h' - h$.

\begin{lemma}
\label{lemma:stability}
Let the hypothesis set $H$ be a vector space. Assume that $N$ is a
proper closed convex function and that $N$ and $L$ are differentiable.
Assume that $F_{\h Q}$ admits a minimizer $h \in H$ and $F_{\h P}$ a
minimizer $h' \in H$ and that $f_P$ and $f_Q$ coincide on the support
of $\h Q$. Then, the following bound holds,
\begin{equation}
B_N(h' \Arrowvert h) + B_N(h \Arrowvert h') \leq \frac{2\dis_L(\h P, \h Q)}{\lambda}.
\end{equation}
\end{lemma}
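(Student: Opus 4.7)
The plan is to run the standard algorithmic-stability argument, using the two first-order optimality conditions together with convexity of the loss to isolate the regularizer-Bregman term, and then bound the resulting ``swap'' expression by the discrepancy.

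First I would use the fact that $h$ is an unconstrained minimizer of $F_{\hat Q}$ on the vector space $H$ and $h'$ is a minimizer of $F_{\hat P}$, so $\nabla F_{\hat Q}(h) = 0$ and $\nabla F_{\hat P}(h') = 0$. Plugging this into the definition of the Bregman divergence of $F_{\hat Q}$ and $F_{\hat P}$ gives the two exact identities
\begin{equation*}
B_{F_{\hat Q}}(h' \Arrowvert h) = F_{\hat Q}(h') - F_{\hat Q}(h),
\qquad
B_{F_{\hat P}}(h \Arrowvert h') = F_{\hat P}(h) - F_{\hat P}(h').
\end{equation*}
Since Bregman divergences are additive under sums, $B_{F_{\hat Q}} = B_{\hat R_{\hat Q}} + \lambda B_N$, and likewise for $\hat P$. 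Convexity of $L$ (hence of $\hat R_{\hat Q}$ and $\hat R_{\hat P}$) makes the loss Bregman terms non-negative, so adding the two identities yields
\begin{equation*}
\lambda \bigl[ B_N(h' \Arrowvert h) + B_N(h \Arrowvert h') \bigr]
\leq \bigl[ F_{\hat Q}(h') - F_{\hat Q}(h) \bigr] + \bigl[ F_{\hat P}(h) - F_{\hat P}(h') \bigr].
\end{equation*}

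Next I would observe that the regularization contributions cancel on the right-hand side, leaving
\begin{equation*}
\lambda \bigl[ B_N(h' \Arrowvert h) + B_N(h \Arrowvert h') \bigr]
\leq \bigl[ \hat R_{\hat Q}(h') - \hat R_{\hat P}(h') \bigr]
+ \bigl[ \hat R_{\hat P}(h) - \hat R_{\hat Q}(h) \bigr].
\end{equation*}
Here I use the assumption that $f_P$ and $f_Q$ agree on $\supp(\hat Q)$, so $\hat R_{\hat Q}(g) = \L_{\hat Q}(g, f_Q) = \L_{\hat Q}(g, f_P)$ for every $g$, while $\hat R_{\hat P}(g) = \L_{\hat P}(g, f_P)$ by definition. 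The two bracketed differences are therefore $\L_{\hat Q}(h', f_P) - \L_{\hat P}(h', f_P)$ and $\L_{\hat P}(h, f_P) - \L_{\hat Q}(h, f_P)$; since the subsection assumes $f_P \in H$, each is at most $\dis_L(\hat P, \hat Q)$ by the very definition of the discrepancy. Summing and dividing by $\lambda$ yields the claimed bound.

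The only subtle step is the last one, which relies on $f_P \in H$ to apply the discrepancy bound. This is the assumption stated at the top of the subsection but not restated in the lemma, so I would make that dependence explicit. Everything else is bookkeeping: optimality eliminates the first-order terms in the Bregman expansion, and convexity of the loss lets us discard the loss-Bregman terms. The argument never needs smoothness or strong convexity of $N$ beyond differentiability and convexity (used implicitly so that $B_N \geq 0$, though the lemma's conclusion is actually stated on the \emph{sum} $B_N(h'\Arrowvert h) + B_N(h\Arrowvert h')$, which is the object naturally produced by the symmetric swap argument).
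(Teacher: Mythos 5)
Your proposal is correct and follows essentially the same route as the paper's proof: first-order optimality kills the linear terms in the Bregman expansions of $F_{\h Q}$ and $F_{\h P}$, additivity plus non-negativity of the loss Bregman terms isolates $\lambda(B_N(h'\Arrowvert h)+B_N(h\Arrowvert h'))$, and the resulting swap expression is bounded by $2\dis_L(\h P,\h Q)$ using $f_P=f_Q$ on $\supp(\h Q)$ and the standing assumption $f_P\in H$. You correctly identify the reliance on $f_P\in H$, which the paper also invokes explicitly in the last line of its proof.
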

\begin{proof}
Since $B_{F_{\h Q}} = B_{\h R_{\h Q}} + \lambda B_{N}$ and $B_{F_{{\h P}}} =
B_{\h R_{{\h P}}} + \lambda B_{N}$, and a Bregman divergence is
non-negative, the following inequality holds:
\begin{equation*}
\lambda \bigl(B_N(h' \Arrowvert h) + B_N(h \Arrowvert h')\bigr) \leq B_{F_{\h Q}}(h' \Arrowvert h) + B_{F_{{\h P}}}(h \Arrowvert h').
\end{equation*}
By the definition of $h$ and $h'$ as the minimizers of $F_{\h Q}$ and
$F_{{\h P}}$, $\nabla_{\h Q} F (h) = \nabla_{\h P} F (h') = 0$ and
\begin{multline*}
  \lambda \big(B_{F_{\h Q}}(h' \Arrowvert h) + B_{F_{{\h P}}}(h \Arrowvert h')\big) \\
{\begin{aligned}
& = \h R_{\h Q}(h') - \h R_{\h Q}(h) + \h R_{\h P}(h) - \h R_{\h P}(h')\\
& = \big(\L_{\h P}(h, f_P) - \L_{\h Q}(h, f_P)\big)\\
& \quad - \big(\L_{\h P}(h', f_P) - \L_{\h Q}(h', f_P)\big)
 \leq 2 \dis_L(\h P, \h Q).
\end{aligned}}
\end{multline*}
This last inequality holds since by assumption $f_P$ is in $H$.
\end{proof}

We will say that a loss function $L$ is \emph{$\sigma$-admissible}
when there exists $\sigma \in \Rset_+$ such that for any two
hypotheses $h, h' \in H$ and for all $x \in X$, and $y \in Y$,
\begin{equation}
\big|L(h(x), y) - L(h'(x), y)\big| \leq \sigma \big|h(x) - h'(x)\big|.
\end{equation}
This assumption holds for the hinge loss with $\sigma = 1$ and for the
$L_q$ loss with $\sigma = q (2M)^{q - 1}$ when the hypothesis set and the
set of output labels are bounded by some $M \in \Rset_+$: $\forall h
\in H, \forall x \in X, |h(x)| \leq M$ and $\forall y \in Y, |y| \leq
M$.

\begin{theorem}
\label{th:stability}
Let $K\colon X \times X \to \Rset$ be a positive-definite symmetric
kernel such that $K(x, x) \leq \kappa^2 < \infty$ for all $x \in X$,
and let $H$ be the reproducing kernel Hilbert space associated to
$K$. Assume that the loss function $L$ is $\sigma$-admissible. Let
$h'$ be the hypothesis returned by the regularization algorithm based
on $N(\cdot) = \norm{\cdot}_K^2$ for the empirical distribution $\h
P$, and $h$ the one returned for the empirical distribution $\h Q$,
and that and that $f_P$ and $f_Q$ coincide on $\supp(\h Q)$. Then, for
all $x \in X$, $y \in Y$,
\begin{equation}
  \big| L(h'(x), y) - L(h(x), y) \big| \leq \kappa \sigma \sqrt{\frac{\dis_L(\h P, \h Q)}{\lambda}}.
\end{equation}
\end{theorem}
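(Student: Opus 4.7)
The plan is to chain three inequalities: first reduce the pointwise loss difference to a pointwise hypothesis difference via $\sigma$-admissibility, then reduce the pointwise hypothesis difference to an RKHS norm difference via the reproducing property and Cauchy-Schwarz, and finally bound the RKHS norm difference using Lemma~\ref{lemma:stability} specialized to $N(\cdot) = \norm{\cdot}_K^2$.

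First, by $\sigma$-admissibility applied at $(x, y)$, I get $|L(h'(x), y) - L(h(x), y)| \leq \sigma |h'(x) - h(x)|$. Next, using the reproducing property $h(x) = \iprod{h}{K(x, \cdot)}_K$ and Cauchy-Schwarz, together with $K(x,x) \leq \kappa^2$, I obtain $|h'(x) - h(x)| = |\iprod{h' - h}{K(x, \cdot)}_K| \leq \kappa \norm{h' - h}_K$. So it remains only to show $\norm{h' - h}_K \leq \sqrt{\dis_L(\h P, \h Q)/\lambda}$.

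For this last step, I would compute the Bregman divergence of $N(h) = \norm{h}_K^2$ explicitly. Since $\nabla N(h) = 2h$, a direct expansion yields $B_N(h' \Arrowvert h) = \norm{h'}_K^2 - \norm{h}_K^2 - \iprod{h' - h}{2h}_K = \norm{h' - h}_K^2$, and symmetrically $B_N(h \Arrowvert h') = \norm{h' - h}_K^2$. Therefore $B_N(h' \Arrowvert h) + B_N(h \Arrowvert h') = 2 \norm{h' - h}_K^2$. Plugging this into Lemma~\ref{lemma:stability} gives $2 \norm{h' - h}_K^2 \leq 2 \dis_L(\h P, \h Q)/\lambda$, hence $\norm{h' - h}_K \leq \sqrt{\dis_L(\h P, \h Q)/\lambda}$.

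Combining the three bounds delivers the theorem. There is no real obstacle here; the only thing to be careful about is verifying that $N(\cdot) = \norm{\cdot}_K^2$ satisfies the hypotheses of Lemma~\ref{lemma:stability} (proper, closed, convex, differentiable on the RKHS) and that the assumptions on $f_P, f_Q$ agreeing on $\supp(\h Q)$ are carried over verbatim so the lemma applies. The rest is a routine two-line chain of inequalities.
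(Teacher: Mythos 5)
Your proof is correct and follows essentially the same route as the paper's: $\sigma$-admissibility, then the reproducing property with Cauchy--Schwarz, then Lemma~\ref{lemma:stability} with the explicit computation $B_N(h' \Arrowvert h) + B_N(h \Arrowvert h') = 2\norm{\Delta h}_K^2$. The only detail the paper adds that you gloss over is that Lemma~\ref{lemma:stability} assumes $L$ differentiable, and the paper remarks that the bound on $\norm{\Delta h}_K$ can also be obtained without that assumption by a direct convexity argument.
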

\begin{proof}
  For $N(\cdot) = \norm{\cdot}_K^2$, $N$ is a proper closed convex
  function and is differentiable. We have $B_N(h' \Arrowvert h) =
  \norm{h' - h}_K^2$, thus $B_N(h' \Arrowvert h) + B_N(h \Arrowvert
  h') = 2 \norm{\Delta h}_K^2$. When $L$ is differentiable, by
  Lemma~\ref{lemma:stability},
\begin{equation}
\label{eq:Delta_h}
      2 \norm{\Delta h}_K^2 \leq \frac{2\dis_L(\h P, \h Q)}{\lambda}.
\end{equation}
This result can also be shown directly without assuming that $L$ is
differentiable by using the convexity of $N$ and the minimizing
properties of $h$ and $h'$ with a proof that is longer than that of
Lemma~\ref{lemma:stability}.

Now, by the reproducing property of $H$, for all $x \in H$, $\Delta
h(x) = \iprod{\Delta h}{K(x, \cdot)}$ and by the Cauchy-Schwarz
inequality, $\abs{\Delta h(x)} \leq \norm{\Delta h}_K (K(x, x))^{1/2}
\leq \kappa \norm{\Delta h}_K$. By the $\sigma$-admissibility of $L$,
for all $x \in X$, $y \in Y$,
\begin{equation*}
\abs{L(h'(x), y) - L(h(x), y)} \leq \sigma \abs{\Delta h (x)} \leq \kappa \sigma \norm{\Delta h}_K,
\end{equation*}
which, combined with (\ref{eq:Delta_h}), proves the statement of the
theorem.
\end{proof}

Theorem~\ref{th:stability} provides a guarantee on the pointwise
difference of the loss for $h'$ and $h$ with probability one, which of
course is stronger than a bound on the difference between expected
losses or a probabilistic statement. The result, as well as the proof,
also suggests that the discrepancy distance is the ``right'' measure
of difference of distributions for this context. The theorem applies
to a variety of algorithms, in particular SVMs combined with arbitrary
PDS kernels and kernel ridge regression.

In general, the functions $f_P$ and $f_Q$ may not coincide on
$\supp(\h Q)$. For adaptation to be possible, it is reasonable to
assume however that
\begin{equation*}
  L_{\h Q}(f_Q(x), f_P(x)) \ll 1 \quad \text{and}
  \quad L_{\h P}(f_Q(x), f_P(x)) \ll 1.
\end{equation*}
This can be viewed as a condition on the proximity of the labeling
functions (the $Y$s), while the discrepancy distance relates to the
distributions on the input space (the $X$s). The following result
generalizes Theorem~\ref{th:stability} to this setting in the case of
the square loss.

\begin{theorem}
\label{th:stability2}
Under the assumptions of Theorem~\ref{th:stability}, but with $f_Q$
and $f_P$ potentially different on $\supp(\h Q)$, when $L$ is the
square loss $L_2$ and $\delta^2 = L_{\h Q}(f_Q(x), f_P(x)) \ll 1$, then,
for all $x \in X$, $y \in Y$,
\begin{multline}
  \big| L(h'(x), y) - L(h(x), y) \big| \leq \\  \frac{2 \kappa M}{\lambda}\Big(\kappa \delta +
\sqrt{\kappa^2 \delta^2 + 4 \lambda \dis_L(\h P, \h Q)}\Big).
\end{multline}
\end{theorem}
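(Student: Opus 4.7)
My plan is to mimic the proof of Theorem~\ref{th:stability}, but track the extra error introduced when $f_P$ and $f_Q$ disagree on $\supp(\h Q)$. The key step in the proof of Lemma~\ref{lemma:stability} was
\[
\h R_{\h Q}(h') - \h R_{\h Q}(h) + \h R_{\h P}(h) - \h R_{\h P}(h')
= \bigl(\L_{\h P}(h,f_P) - \L_{\h Q}(h,f_P)\bigr) - \bigl(\L_{\h P}(h',f_P) - \L_{\h Q}(h',f_P)\bigr),
\]
which used $f_Q = f_P$ on $\supp(\h Q)$ to rewrite $\h R_{\h Q}(h) = \L_{\h Q}(h,f_Q)$ as $\L_{\h Q}(h,f_P)$. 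When $f_Q \neq f_P$, I would instead add and subtract $\L_{\h Q}(h',f_P) - \L_{\h Q}(h,f_P)$ so that the left-hand side splits as a discrepancy piece, bounded by $2\dis_L(\h P,\h Q)$ exactly as before, plus a residual piece
\[
\bigl[\L_{\h Q}(h',f_Q) - \L_{\h Q}(h',f_P)\bigr] - \bigl[\L_{\h Q}(h,f_Q) - \L_{\h Q}(h,f_P)\bigr]
\]
measuring the effect of $f_Q \neq f_P$ under $\h Q$.

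For the square loss the residual piece simplifies nicely: the identity $(u - a)^2 - (u - b)^2 = (2u - a - b)(b - a)$ applied with $a = f_Q(x)$, $b = f_P(x)$, $u \in \{h(x),h'(x)\}$, gives after cancellation
\[
2\,\E_{x \sim \h Q}\bigl[(h'(x) - h(x))(f_P(x) - f_Q(x))\bigr] = 2\,\E_{x \sim \h Q}[\Delta h(x)(f_P(x) - f_Q(x))].
\]
By Cauchy--Schwarz this is at most $2\delta\sqrt{\E_{\h Q}[\Delta h^2]}$, and the reproducing property yields $|\Delta h(x)| \le \kappa \|\Delta h\|_K$ pointwise, hence $\sqrt{\E_{\h Q}[\Delta h^2]} \le \kappa \|\Delta h\|_K$. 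Combining with the Bregman lower bound $\lambda \cdot 2\|\Delta h\|_K^2 \le B_{F_{\h Q}}(h' \Arrowvert h) + B_{F_{\h P}}(h \Arrowvert h')$ from Lemma~\ref{lemma:stability} gives the quadratic inequality
\[
\lambda \|\Delta h\|_K^2 \;\le\; \kappa\,\delta\,\|\Delta h\|_K + \dis_L(\h P,\h Q).
\]

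Solving this quadratic in $u = \|\Delta h\|_K$ by the usual root formula yields
\[
\|\Delta h\|_K \;\le\; \frac{\kappa\delta + \sqrt{\kappa^2 \delta^2 + 4\lambda\,\dis_L(\h P,\h Q)}}{2\lambda}.
\]
Finally, I would invoke $\sigma$-admissibility for the square loss with bounded outputs, which gives $\sigma = 2(2M) = 4M$, together with $|\Delta h(x)| \le \kappa \|\Delta h\|_K$ from the reproducing property and Cauchy--Schwarz, to conclude
\[
|L(h'(x),y) - L(h(x),y)| \;\le\; 4M\kappa\,\|\Delta h\|_K \;\le\; \frac{2\kappa M}{\lambda}\bigl(\kappa\delta + \sqrt{\kappa^2\delta^2 + 4\lambda\,\dis_L(\h P,\h Q)}\bigr),
\]
which is the stated bound. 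The main obstacle is cleanly isolating the residual term so that it reduces to a single inner product $\E_{\h Q}[\Delta h \cdot (f_P - f_Q)]$; this is precisely what the square-loss identity enables, and is why the theorem is stated for $L_2$ rather than a general $\sigma$-admissible loss.
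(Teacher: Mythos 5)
Your proof is correct and follows essentially the same route as the paper's: the same decomposition isolating the cross term $2\E_{\h Q}[\Delta h\,(f_P(x) - f_Q(x))]$, the same Cauchy--Schwarz and reproducing-property bounds leading to the quadratic inequality $\lambda\norm{\Delta h}_K^2 \leq \dis_L(\h P,\h Q) + \kappa\delta\norm{\Delta h}_K$, and the same conclusion via $\sigma$-admissibility with $\sigma = 4M$. The only difference is that you spell out, via the identity $(u-a)^2-(u-b)^2=(2u-a-b)(b-a)$, the algebra that the paper leaves implicit when introducing the extra expectation term.
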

\begin{proof}
Proceeding as in the proof of Lemma~\ref{lemma:stability} and using the
definition of the square loss and the Cauchy-Schwarz inequality give
\begin{multline*}
  \lambda \big(B_{F_{\h Q}}(h' \Arrowvert h) + B_{F_{{\h P}}}(h \Arrowvert h')\big) \\
{\begin{aligned}
& = \h R_{\h Q}(h') - \h R_{\h Q}(h) + \h R_{\h P}(h) - \h R_{\h P}(h')\\
& = \big(\L_{\h P}(h, f_P) - \L_{\h Q}(h, f_P)\big)\\
& \quad - \big(\L_{\h P}(h', f_P) - \L_{\h Q}(h', f_P)\big)\\
& \qquad + 2 \E_{\h Q} [(h'(x) - h(x)) (f_P(x) - f_Q(x)]\\
& \leq 2 \dis_L(\h P, \h Q) + 2 \sqrt{\E_{\h Q} [\Delta h(x)^2] \E_{\h Q}[L(f_P(x), f_Q(x))]}\\
& \leq 2 \dis_L(\h P, \h Q) + 2 \kappa \norm{\Delta h}_K \delta.
\end{aligned}}
\end{multline*}
Since $N(\cdot) = \norm{\cdot}_K^2$, the inequality can be rewritten
as
\begin{equation}
\lambda \norm{\Delta h}_K^2 \leq \dis_L(\h P, \h Q) + \kappa \delta \norm{\Delta h}_K .
\end{equation}
Solving the second-degree polynomial in $\norm{\Delta h}_K$ leads to
the equivalent constraint
\begin{equation}
\norm{\Delta h}_K \leq \frac{1}{2 \lambda}\Big(\kappa \delta + \\
\sqrt{\kappa^2 \delta^2 + 4 \lambda \dis_L(\h P, \h Q)}\Big).
\end{equation}
The result then follows by the $\sigma$-admissibility of $L$ as in
the proof of Theorem~\ref{th:stability}, with $\sigma = 4M$.
\end{proof}

Using the same proof schema, similar bounds can be derived for other
loss functions.

When the assumption $f_P \in H$ is relaxed, the following theorem
holds.

\begin{theorem}
\label{th:stability3}
Under the assumptions of Theorem~\ref{th:stability}, but with $f_P$
not necessarily in $H$ and $f_Q$ and $f_P$ potentially different on
$\supp(\h Q)$, when $L$ is the square loss $L_2$ and $\delta' = L_{\h
  Q}(h_P^*(x), f_Q(x))^{1/2}  + L_{\h
  P}(h_P^*(x), f_P(x))^{1/2} \ll 1$, then, for all $x \in X$, $y \in Y$,
\begin{multline}
  \big| L(h'(x), y) - L(h(x), y) \big| \leq \\  \frac{2 \kappa M}{\lambda}\Big(\kappa \delta' +
\sqrt{\kappa^2 \delta'^2 + 4 \lambda \dis_L(\h P, \h Q)}\Big).
\end{multline}
\end{theorem}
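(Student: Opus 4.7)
The plan is to adapt the proof of Theorem~\ref{th:stability2} by using $h_P^* \in H$ as a bridge hypothesis in place of $f_P$, which is no longer assumed to lie in $H$. Since the discrepancy distance only gives control on expressions of the form $|\L_{\h P}(g, g') - \L_{\h Q}(g, g')|$ when both $g, g' \in H$, $f_P$ itself cannot be plugged into the discrepancy; but $h_P^*$ can.

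First I would replay the opening steps of Lemma~\ref{lemma:stability}: using that $h$ and $h'$ are the minimizers of $F_{\h Q}$ and $F_{\h P}$ and that $N(\cdot) = \norm{\cdot}_K^2$ makes $B_N(h'\Arrowvert h) + B_N(h \Arrowvert h') = 2\norm{\Delta h}_K^2$, I obtain
\begin{equation*}
2 \lambda \norm{\Delta h}_K^2
 \leq \L_{\h Q}(h', f_Q) - \L_{\h Q}(h, f_Q) + \L_{\h P}(h, f_P) - \L_{\h P}(h', f_P).
\end{equation*}
Then I would add and subtract the four quantities $\L_{\h Q}(h', h_P^*)$, $\L_{\h Q}(h, h_P^*)$, $\L_{\h P}(h', h_P^*)$, $\L_{\h P}(h, h_P^*)$ on the right. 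This splits the right-hand side into (a) a ``discrepancy piece''
\begin{equation*}
\bigl[\L_{\h Q}(h', h_P^*) - \L_{\h P}(h', h_P^*)\bigr] - \bigl[\L_{\h Q}(h, h_P^*) - \L_{\h P}(h, h_P^*)\bigr],
\end{equation*}
which is at most $2\dis_L(\h P, \h Q)$ since $h, h', h_P^* \in H$; and (b) two ``label-swap'' residuals comparing the $f_Q$-loss to the $h_P^*$-loss under $\h Q$ and the $f_P$-loss to the $h_P^*$-loss under $\h P$.

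The main technical step is to show that each residual collapses into a linear functional of $\Delta h$. Using the square loss identity $(h-a)^2 - (h-b)^2 = (b-a)(2h - a - b)$ and taking the difference between the $h'$ and $h$ versions, the $\h Q$-residual simplifies exactly to $2\E_{\h Q}[\Delta h(x)(h_P^*(x) - f_Q(x))]$, and analogously for $\h P$. Cauchy-Schwarz together with the reproducing-kernel bound $\E_{\h Q}[\Delta h(x)^2] \leq \kappa^2 \norm{\Delta h}_K^2$ (and similarly for $\h P$) then yields
\begin{equation*}
2\lambda \norm{\Delta h}_K^2 \leq 2\dis_L(\h P, \h Q) + 2\kappa \norm{\Delta h}_K \delta',
\end{equation*}
i.e.\ the same quadratic inequality as in Theorem~\ref{th:stability2} with $\delta$ replaced by $\delta'$.

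Finally, I would solve this second-degree inequality in $\norm{\Delta h}_K$ exactly as in Theorem~\ref{th:stability2} and then invoke the $\sigma$-admissibility of the square loss (with $\sigma = 4M$) together with $|\Delta h(x)| \leq \kappa \norm{\Delta h}_K$ to turn the RKHS bound into the pointwise bound on $|L(h'(x),y) - L(h(x),y)|$ asserted in the theorem. The only non-routine step is the decomposition in the previous paragraph: verifying that the two residuals really do reduce to inner products against $\Delta h$ (so that they produce a term linear, not quadratic, in $\norm{\Delta h}_K$, allowing the quadratic inequality to be solved) is where the structure of the square loss is used in an essential way.
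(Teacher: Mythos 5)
Your proposal is correct and follows essentially the same route as the paper: bridge through $h_P^*$ (the element of $H$ that the discrepancy can see), use the square-loss identity to reduce the two label-swap residuals to linear functionals of $\Delta h$, bound them by Cauchy--Schwarz and the reproducing-kernel inequality to obtain the quadratic inequality $\lambda \norm{\Delta h}_K^2 \leq \dis_L(\h P, \h Q) + \kappa \delta' \norm{\Delta h}_K$, and finish as in Theorem~\ref{th:stability2}. The paper writes the four-term decomposition directly as an equality rather than via add-and-subtract, but the algebra and all bounds are identical.
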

\begin{proof}
Proceeding as in the proof of Theorem~\ref{th:stability2} and using the
definition of the square loss and the Cauchy-Schwarz inequality give
\begin{multline*}
  \lambda \big(B_{F_{\h Q}}(h' \Arrowvert h) + B_{F_{{\h P}}}(h \Arrowvert h')\big) \\
{\begin{aligned}
& = \big(\L_{\h P}(h, h_P^*) - \L_{\h Q}(h, h_P^*)\big)\\
& \quad - \big(\L_{\h P}(h', h_P^*) - \L_{\h Q}(h', h_P^*)\big)\\
& \qquad - 2 \E_{\h P} [(h'(x) - h(x)) (h_P^*(x) - f_P(x)]\\
& \qquad + 2 \E_{\h Q} [(h'(x) - h(x)) (h_P^*(x) - f_Q(x)]\\
& \leq 2 \dis_L(\h P, \h Q) + 2 \sqrt{\E_{\h P} [\Delta h(x)^2] \E_{\h P}[L(h_P^*(x), f_P(x))]} \\
& + 2 \sqrt{\E_{\h Q} [\Delta h(x)^2] \E_{\h Q}[L(h_P^*(x), f_Q(x))]}\\
& \leq 2 \dis_L(\h P, \h Q) + 2 \kappa \norm{\Delta h}_K \delta'.
\end{aligned}}
\end{multline*}
The rest of the proof is identical to that of Theorem~\ref{th:stability2}.
\end{proof}

\section{Discrepancy Minimization Algorithms}
\label{sec:alg}

The discrepancy distance $\dis_L(\h P, \h Q)$ appeared as a critical
term in several of the bounds in the last section. In particular,
Theorems~\ref{th:stability} and \ref{th:stability2} suggest that if we
could select, instead of $\h Q$, some other empirical distribution $\h
Q'$ with a smaller empirical discrepancy $\dis_L(\h P, \h Q')$ and use
that for training a regularization-based algorithm, a better guarantee
would be obtained on the difference of pointwise loss between $h'$ and
$h$. Since $h'$ is fixed, a sufficiently smaller discrepancy would
actually lead to a hypothesis $h$ with pointwise loss closer to that
of $h'$.

The training sample is given and we do not have any control over the
support of $\h Q$. But, we can search for the distribution $\h Q'$
with the minimal empirical discrepancy distance:
\begin{equation}
\label{eq:dis_min}
\h Q' = \argmin_{\h Q' \in \Q} \dis_L(\h P, \h Q'),
\end{equation}
where $\Q$ denotes the set of distributions with support $\supp(\h
Q)$. This leads to an optimization problem that we shall study
in detail in the case of several loss functions.

Note that using $\h Q'$ instead of $\h Q$ for training can be viewed
as \emph{reweighting} the cost of an error on each training point.
The distribution $\h Q'$ can be used to emphasize some points or
de-emphasize others to reduce the empirical discrepancy distance.
This bears some similarity with the reweighting or
\emph{importance weighting} ideas used in statistics and machine
learning for sample bias correction techniques \cite{elkan,bias} and other
purposes. Of course, the objective optimized here based on the
discrepancy distance is distinct from that of previous reweighting
techniques.

We will denote by $S_Q$ the support of $\h Q$, by $S_P$ the support of
$\h P$, and by $S$ their union $\supp(\h Q) \cup \supp(\h P)$, with
$|S_Q| = m_0 \leq m$ and $|S_P| = n_0 \leq n$.

In view of the definition of the discrepancy distance, problem
(\ref{eq:dis_min}) can be written as a min-max problem:
\begin{equation}
\h Q' = \argmin_{\h Q' \in \Q} \max_{h, h' \in H}
\abs{\L_{\h P}(h', h) - \L_{\h Q'}(h', h)}.
\end{equation}
As with all min-max problems, the problem has a natural game
theoretical interpretation. However, here, in general, we cannot
permute the $\min$ and $\max$ operators since the convexity-type
assumptions of the minimax theorems do not hold. Nevertheless, since
the max-min value is always a lower bound for the min-max, it provides
us with a lower bound on the value of the game, that is the minimal
discrepancy:
\begin{multline}
\label{eq:max-min}
  \max_{h, h' \in H} \min_{\h Q' \in \Q} \abs{\L_{\h P}(h', h) -
    \L_{\h Q'}(h', h)}
  \leq\\
  \min_{\h Q' \in \Q} \max_{h, h' \in H} \abs{\L_{\h P}(h', h) -
    \L_{\h Q'}(h', h)}.
\end{multline}
We will later make use of this inequality. Let us now examine the
minimization problem (\ref{eq:dis_min}) and its algorithmic solutions
in the case of classification with the 0-1 loss and regression with
the $L_2$ loss.

\subsection{Classification, 0-1 Loss}

For the 0-1 loss, the problem of finding the best distribution $\h Q'$
can be reformulated as the following min-max program:
\begin{align}
    & \min_{\ Q'} \max_{a \in H \Delta H} \big| \h Q'(a) -
      \h P(a) \big| \\
    & \text{subject to} \quad \forall x \in S_Q, \h Q'(x) \geq 0 \wedge \sum_{x \in S_Q} \h Q'(x) = 1,
\end{align}
where we have identified $H \Delta H = \set{\abs{h' - h}\colon h, h'
  \in H}$ with the set of regions $a \subseteq X$ that are the support
of an element of $H \Delta H$. This problem is similar to the min-max
resource allocation problem that arises in task optimization
\cite{kouvelis}. It can be rewritten as the following linear program
(LP):
\begin{align}
    \min_{\ Q'} & \quad \delta \\
    \text{subject to} & \quad \forall a \in H \Delta H, \h Q'(a) -
      \h P(a)  \leq \delta\\
       & \quad \forall a \in H \Delta H,  \h P(a) - \h Q'(a)
        \leq \delta\\
      & \quad \forall x \in S_Q, \h Q'(x) \geq 0 \wedge \sum_{x \in S_Q}
      \h Q'(x) = 1.
\end{align}
The number of constraints is proportional to $|H \Delta H|$ but it can be
reduced to a finite number by observing that two subsets $a, a'
\!\in\! H \Delta H$ containing the same elements of $S$ lead to
redundant constraints, since
\begin{equation}
\big| \h Q'(a) - \h P(a) \big| = \big| \h Q'(a') - \h P(a') \big|.
\end{equation}
Thus, it suffices to keep one canonical member $a$ for each such
equivalence class. The necessary number of constraints to be
considered is proportional to $\Pi_{H \Delta H}(m_0 + n_0)$, the
shattering coefficient of order $(m_0 + n_0)$ of the hypothesis class
$H \Delta H$. By the Sauer's lemma, this is bounded in terms of the
VC-dimension of the class $H \Delta H$, $\Pi_{H \Delta H}(m_0 + n_0)
\leq O((m_0 + n_0)^{VC(H \Delta H)})$, which can be bounded by $O((m_0
+ n_0)^{2 VC(H)})$ since it is not hard to see that $VC(H \Delta H)
\leq 2 VC(H)$.

In cases where we can test efficiently whether there exists a
consistent hypothesis in $H$, e.g., for half-spaces in
$\mathbb{R}^d$, we can generate in time $O((m_0+n_0)^{2d})$ all
consistent labeling of the sample points by $H$.  
(We remark that computing the discrepancy with the 0-1 loss is closely
related to agnostic learning. The implications of this fact will be
described in a longer version of this paper.)

\ignore{ we can
  enumerate all canonical sets $a \in H \Delta H$ by considering all
  combinations of $ {m \choose 2 d} = O( m^{2d})$ points which define
  pairs of half-spaces.  More generally, if for a hypothesis set $H$
  we have access to an algorithm that can test in polynomial time if
  there exists $h \in H$ that is consistent with the a labeling, we
  can determine all canonical sets in $O(m^d)$.  }

\begin{figure}[t]
\begin{center}
\ipsfig{.5}{figure=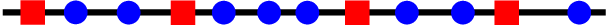}\\
(a)\\[.33cm]
\ipsfig{.5}{figure=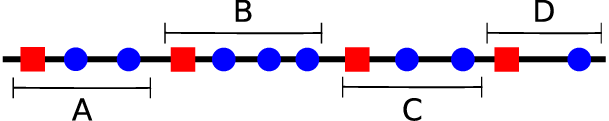}\\
(b)
\end{center}
\vspace{-.5cm}
\caption{Illustration of the discrepancy minimization algorithm in
  dimension one. (a) Sequence of labeled (red) and unlabeled (blue)
  points. (b) The weight assigned to each labeled point is the sum of
  the weights of the consecutive blue points on its right.}
\label{fig:1d_example}
\vspace{-.5cm}
\end{figure}

\subsection{Computing the Discrepancy in 1D}

We consider the case where $X = [0, 1]$ and derive a simple algorithm
for minimizing the discrepancy for 0-1 loss.  Let $H$ be the class of
all prefixes (i.e., $[0,z]$) and suffixes (i.e., $[z,1]$).  Our class
of $H\Delta H$ includes all the intervals (i.e., $(z_1,z_2]$) and
their complements (i.e., $[0,z_1]\cup (z_2,1]$).  We start with a
general lower bound on the discrepancy.

Let $U$ denote the set of \emph{unlabeled regions}, that is the set of
regions $a$ such that $a \cap S_Q = \emptyset$ and $a \cap S_P \neq
\emptyset$.
\ignore{ For a fixed region $a$ containing at least one point in
$S_Q$, $\min_{\h Q'} \abs{\h Q'(a) - \h P(a)} = 0$, since we can
define $\h Q'$ by assigning the distribution mass $\h P(a)$ to that
point and no mass to other labeled points in that region. But,}
If $a$ is an unlabeled region, then $\abs{\h Q'(a) - \h P(a)} =
\h P(a)$ for any $\h Q'$. Thus, by the max-min inequality (\ref{eq:max-min}), the
following lower bound holds for the minimum discrepancy:
\begin{equation}
\label{eq:lower_bound}
  \max_{a \in U} \h P(a) \leq
  \min_{\h Q' \in \Q} \max_{h, h' \in H} \abs{\L_{\h P}(h', h) -
    \L_{\h Q'}(h', h)}.
\end{equation}
In particular, if there is a large unlabeled region $a$, we cannot
hope to achieve a small empirical discrepancy.
% and adaptation may fail.

In the one-dimensional case, we give a simple linear-time algorithm
that does not require an LP and show that the lower bound
(\ref{eq:lower_bound}) is reached. Thus, in that case, the $\min$ and
$\max$ operators commute and the minimal discrepancy distance is
precisely $\min_{a \in U} \h P(a)$.

Given our definition of $H$, the unlabeled regions are open intervals, or complements of these
sets, containing only points from $S_P$ with endpoints defined by
elements of $S_Q$.

Let us denote by $s_1, \ldots, s_{m_0}$ the elements of $S_Q$, by
$n_i$, $i \in [1, m_0]$, the number of consecutive unlabeled points to
the right of $s_i$ and $n=\sum n_i$. We will make an additional
technical assumption that there are no unlabeled points to the left of
$s_1$. Our algorithm consists of defining the weight $\h Q'(s_i)$ as
follows:
\begin{equation}
\h Q'(s_i) = n_i/n.
\end{equation}
This requires first sorting $S_Q\cup S_P$ and then computing $n_i$ for
each $s_i$. Figure~\ref{fig:1d_example} illustrates the algorithm.

\begin{proposition}
\label{prop:1d_alg}
Assume that $X$ consists of the set of points on the real line and $H$
the set of half-spaces on $X$. Then, for any $\h Q$ and $\h P$, $\h
Q'(s_i) = n_i/n$ minimizes the empirical discrepancy and can be
computed in time $O((m + n) \log (m + n))$.
\end{proposition}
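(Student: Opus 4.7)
The plan is to verify two things: the running time and that the proposed weighting actually attains the minimum empirical discrepancy. The running time is easy: sorting $S_Q \cup S_P$ costs $O((m+n)\log(m+n))$, and once the points are sorted, a single linear scan computes each $n_i$ and hence each $\h Q'(s_i) = n_i/n$.

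For optimality, I would use the lower bound (\ref{eq:lower_bound}) stated just before the proposition. Under the standing assumption that no target points lie to the left of $s_1$, every unlabeled region $a \in U$ (an interval or a complement of an interval containing no $s_i$ but some $S_P$ point) satisfies $\h P(a) \le \max_{1 \le i \le m_0} n_i/n$, since the target points in such a region sit strictly inside some gap $(s_i, s_{i+1})$ (with $s_{m_0+1} := +\infty$), and that gap contains $n_i$ target points. So the lower bound on the minimax value is $\max_i n_i/n$. It therefore suffices to show that for every $a \in H\Delta H$,
\begin{equation*}
\big|\h Q'(a) - \h P(a)\big| \le \max_i \frac{n_i}{n}.
\end{equation*}

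I would prove this by a direct computation. Take a typical element $a = (z_1, z_2]$ (the complement case follows because $|\h Q'(a^c) - \h P(a^c)| = |\h Q'(a) - \h P(a)|$). Let $s_a, \ldots, s_b$ be the labeled points that lie in $(z_1, z_2]$ (with $a \le b$; the case where no labeled point is present is trivial since then $\h Q'(a)=0$ and $\h P(a) \le \max_i n_i/n$ because $a$ sits in a single gap). Let $k_L$ be the number of target points in $(z_1, s_a)$ and $k_R$ the number in $(s_b, z_2]$. By construction, the target points in $(s_a, s_b)$ number exactly $\sum_{i=a}^{b-1} n_i$, so
\begin{equation*}
n\bigl[\h Q'(a) - \h P(a)\bigr] = \sum_{i=a}^{b} n_i - \Bigl(k_L + \sum_{i=a}^{b-1} n_i + k_R\Bigr) = n_b - k_L - k_R.
\end{equation*}
Using $0 \le k_L \le n_{a-1}$ (with $k_L=0$ when $a=1$ by the standing assumption) and $0 \le k_R \le n_b$, both signed bounds $n_b - k_L - k_R \le n_b$ and $k_L + k_R - n_b \le n_{a-1}$ follow, giving $|\h Q'(a) - \h P(a)| \le \max_i n_i/n$ as desired.

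The only delicate point, and the step I would take most care with, is matching the canonical intervals of $H\Delta H$ to the endpoints $z_1, z_2$ and correctly handling the boundary cases $a=1$ (leftmost labeled point) and $b=m_0$ (rightmost labeled point, where the gap extends to $+\infty$); here the assumption that no target points lie to the left of $s_1$ is exactly what kills the $k_L$ term when $a=1$. Once these are handled, both the upper bound on the discrepancy achieved by $\h Q'$ and the lower bound coming from (\ref{eq:lower_bound}) equal $\max_i n_i/n$, so $\h Q'$ is optimal and the minimum value is $\max_{a \in U} \h P(a)$.
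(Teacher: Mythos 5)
Your proof is correct and follows essentially the same route as the paper: both match the lower bound (\ref{eq:lower_bound}), which equals $\max_i n_i/n$, against an upper bound on the discrepancy of $\h Q'$ obtained by decomposing $\h Q'(a) - \h P(a)$ over an interval into the contributions of the two boundary gaps. Your direct estimate $|n_b - k_L - k_R| \le \max(n_{a-1}, n_b)$ for \emph{every} interval is in fact slightly cleaner than the paper's version, which restricts attention to a maximizing interval and argues that one of the two boundary terms must then vanish.
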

\begin{proof}
  Consider an interval $[z_1, z_2]$ that maximizes the discrepancy of
  $\h Q'$. The case of a complement of an interval is the same, since
  the discrepancy of a hypothesis and its negation are identical. Let
  $s_i, \ldots, s_j \in [z_1, z_2]$ be the subset of $\h Q$ in that
  interval, and $p_{i'}, \ldots, p_{j'}\in [z_1, z_2]$ be the subset of
  $\h P$ in that interval.  The discrepancy is $d=|\sum_{k=i}^j \h
  Q'(s_k) - \frac{j'-i'}{n}|$.  By our definition of $\h Q'$, we have
  that $\sum_{k=i}^j \h Q'(s_k)= \frac{1}{n}\sum_{k=i}^j n_k$.  Let
  $p_{i''}$ be the maximal point in $\h P$ which is less than $s_i$
  and $j''$ the minimal point in $\h P$ larger than $s_j$.  We have
  that $j'-i' = (i''-i') + \sum_{k=i}^{j-1} n_k + (j''-j'))$.
  Therefore $d=| (i''-i') +(j''-j')-n_j|= | (i''-i') - (n_j
  -(j''-j'))|$.  Since $d$ is maximal and both terms are
  non-negative, one of them is zero.  Since $j'-j'' \leq n_j$ and
  $i''-i'\leq n_i$, the discrepancy of $\h Q'$ meets the lower bound
  of (\ref{eq:lower_bound}) and is thus optimal.
\end{proof}

\subsection{Regression, $L_2$ loss}
\label{sec:l2}

For the square loss, the problem of finding the best distribution
can be written as
\begin{align*}
    & \min_{\h Q' \in \Q} \max_{h, h' \in H} \Big| \E_{\h P}[(h'(x) - h(x))^2] - \E_{\h Q'}[(h'(x) - h(x))^2] \Big|.
\end{align*}
If $X$ is a subset of $\Rset^N$, $N \!>\! 1$, and the hypothesis set
$H$ is a set of bounded linear functions $H = \set{\x \mapsto \w^\top
  \x\colon \norm{\w} \!\leq\! 1}$, then, the problem can be rewritten
as
\begin{align}
& \min_{\h Q' \in \Q} \max_{\substack{\norm{\w} \leq 1\\\norm{\w'} \leq 1}} \Big| \E_{\h P}[((\w' - \w)^\top \x)^2] - \E_{\h Q'}[((\w' - \w)^\top \x)^2]\Big| \nonumber\\
& = \min_{\h Q' \in \Q} \max_{\substack{\norm{\w} \leq 1\\\norm{\w'} \leq 1}} \Big| \sum_{\x \in S} (\h P(\x) - \h Q'(\x))[(\w' - \w)^\top \x]^2 \Big| \nonumber\\
& = \min_{\h Q' \in \Q} \max_{\norm{\u} \leq 2} \Big| \sum_{\x \in S} (\h P(\x) - \h Q'(\x))[\u^\top \x]^2 \Big| \nonumber\\
\label{eq:34}
& = \min_{\h Q' \in \Q} \max_{\norm{\u} \leq 2} \Big| \u^\top \big(\sum_{\x \in S} (\h P(\x) - \h Q'(\x)) \x \x^\top\big) \u \Big|.
\end{align}
We now simplify the notation and denote by $\s_1, \ldots, \s_{m_0}$ the
elements of $S_Q$, by $z_i$ the distribution weight at point $\s_i$:
$z_i = \h Q'(\s_i)$, and by $\M(\z) \in \Sset^N$ a symmetric matrix
that is an affine function of $\z$:
\begin{equation}
\M(\z) = \M_0 - \sum_{i = 1}^{m_0} z_i \M_i,
\end{equation}
where $\M_0 = \sum_{\x \in S} P(\x) \x\x^\top$ and $\M_i =
\s_i \s_i^\top$. Since problem (\ref{eq:34}) is invariant to the non-zero
bound on $\norm{\u}$, we can equivalently write it with a bound of one
and in view of the notation just introduced give its equivalent form
\begin{equation}
\label{eq:36}
\min_{\substack{\| \z \|_1 = 1\\ \z \geq 0}} \max_{\norm{\u} = 1} \abs{\u^\top \M(\z) \u}.
\end{equation}
Since $\M(\z)$ is symmetric, $\max_{\norm{\u} = 1} \u^\top \M(\z) \u$
is the maximum eigenvalue $\lambda_{\max}$ of $\M(\z)$ and the problem
is equivalent to the following maximum eigenvalue minimization for
a symmetric matrix:
\begin{equation}
\min_{\substack{\| \z \|_1 = 1\\ \z \geq 0}} \max\set{\lambda_{\max}(\M(\z)), \lambda_{\max}(-\M(\z))},
\end{equation}
This is a convex optimization problem since the maximum eigenvalue of
a matrix is a convex function of that matrix and $\M$ is an affine
function of $\z$, and since $\z$ belongs to a simplex.
%But, the objective function is not differentiable. Observe however that
The problem is equivalent to the following semi-definite programming (SDP) problem:
\begin{align}
\label{eq:sdp}
\min_{\z, \lambda} & \quad \lambda\\
\text{subject to} & \quad \lambda \I - \M(\z) \succeq 0\\
& \quad \lambda \I + \M(\z) \succeq 0\\
& \quad \1^\top \z = 1 \wedge \z \geq 0.
\end{align}
SDP problems can be solved in polynomial time using general interior
point methods \cite{nesterov}. Thus, using the general expression of
the complexity of interior point methods for SDPs, the following
result holds.

\begin{proposition}
\label{prop:sdp}
  Assume that $X$ is a subset of $\Rset^N$ and that the hypothesis set
  $H$ is a set of bounded linear functions $H = \set{\x \mapsto
    \w^\top \x\colon \norm{\w} \!\leq\! 1}$. Then, for any $\h Q$
  and $\h P$, the discrepancy minimizing distribution $\h Q'$ for the
  square loss can be found in time $O(m_0^2 N^{2.5} + n_0N^2)$.

\end{proposition}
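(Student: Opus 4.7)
The bulk of the derivation has already been done in the excerpt: equations (\ref{eq:34})--(\ref{eq:sdp}) exhibit the discrepancy minimization problem as the semi-definite program (\ref{eq:sdp}) with decision variables $(\z,\lambda) \in \Rset^{m_0} \times \Rset$. So the plan reduces to two tasks: (i) confirming that (\ref{eq:sdp}) is a correct equivalent formulation whose optimizer $\z^*$ recovers $\h Q'$, and (ii) bounding the total running time by carefully separating the one-time preprocessing from the interior-point solve.

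For correctness, I would note that the chain of reformulations (\ref{eq:34})--(\ref{eq:36}) shows that the minimum empirical discrepancy equals the optimal value of $\min_{\z} \max\{\lambda_{\max}(\M(\z)), \lambda_{\max}(-\M(\z))\}$ over the simplex. The standard fact that $\lambda_{\max}(\A) \leq \lambda$ iff $\lambda \I - \A \succeq 0$ for symmetric $\A$, applied to both $\M(\z)$ and $-\M(\z)$, yields the two linear matrix inequalities in (\ref{eq:sdp}); the simplex constraints come directly from $\h Q' \in \Q$. Thus any optimal solution $(\z^*, \lambda^*)$ of the SDP gives $\h Q'(\s_i) = z_i^*$ and achieves the minimum discrepancy.

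For the complexity bound, I would split the cost into a preprocessing phase and the SDP solve. The matrix $\M_0 = \sum_{\x \in S_P} \h P(\x)\,\x\x^\top$ is formed once as a sum of $n_0$ rank-one updates, each costing $O(N^2)$, giving the $O(n_0 N^2)$ term. The coefficient matrices $\M_i = \s_i \s_i^\top$ need not be stored explicitly; their action on vectors is computed in $O(N)$ from the $\s_i$. The SDP (\ref{eq:sdp}) then has $m_0+1$ scalar variables, two LMI constraints of size $N$, and $m_0+1$ linear constraints. Invoking the general interior-point complexity bound of Nesterov and Nemirovski for SDPs of this size gives an $O(m_0^2 N^{2.5})$ bound on the solve, producing the stated total $O(m_0^2 N^{2.5} + n_0 N^2)$.

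The main obstacle is the complexity accounting in the last step: the exact exponents in interior-point bounds for SDP depend on the variant of the method, on how one exploits the rank-one structure of the $\M_i$, and on the cost of forming and factoring the Schur complement. I would match the claim by citing Nesterov's bound for an SDP with $m_0+1$ variables and $O(N)$-sized positive semi-definite blocks, and verify that the per-iteration cost $O(m_0 N^3 + m_0^2 N^2)$ multiplied by the $O(\sqrt{N})$ iteration count (up to logarithmic factors in the target accuracy) does not exceed $O(m_0^2 N^{2.5})$ for the regime of interest. Everything else is routine.
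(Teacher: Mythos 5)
Your proposal follows the paper's own route exactly: the paper gives no separate proof of Proposition~\ref{prop:sdp} beyond the reduction to the SDP~(\ref{eq:sdp}) carried out in~(\ref{eq:34})--(\ref{eq:36}) and an appeal to the general complexity of interior-point methods for SDPs, which is precisely your plan (your preprocessing/solve split and the standard $O(n^2 m^{2.5})$ interior-point figure with $n = m_0+1$ variables and blocks of size $N$ account for both terms of the stated bound). The caveat you raise about the $O(m_0 N^{3.5})$ Schur-complement term is a fair point of rigor, but the paper does not address it either, so your treatment is at least as complete as the original.
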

It is worth noting that the unconstrained version of this problem (no
constraint on $\z$) and other close problems seem to have been studied
by a number of optimization publications
\cite{fletcher,overton,jarre,helmberg,alizadeh}. This suggests
possibly more efficient specific algorithms than general interior
point methods for solving this problem in the constrained case as
well. Observe also that the matrices $\M_i$ have a specific structure
in our case, they are rank-one matrices and in many applications quite
sparse, which could be further exploited to improve efficiency.

\ignore{
The results of this section can be extended to the case where 
PDS kernels are used (see Appendix C).
}

\section{Experiments}
\label{sec:exp}

This section reports the results of preliminary experiments showing
the benefits of our discrepancy minimization algorithms.  Our results
confirm that our algorithm is effective in practice and produces a
distribution that reduces the empirical discrepancy distance, which
allows us to train on a sample closer to the target distribution with
respect to this metric. They also demonstrate the accuracy benefits of
this algorithm with respect to the target domain.

\begin{figure}[t]
\begin{center}
\begin{tabular}{cc}
\ipsfig{.45}{figure=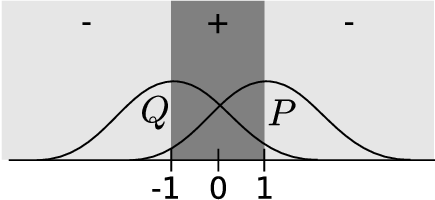}
& \ipsfig{.125}{figure=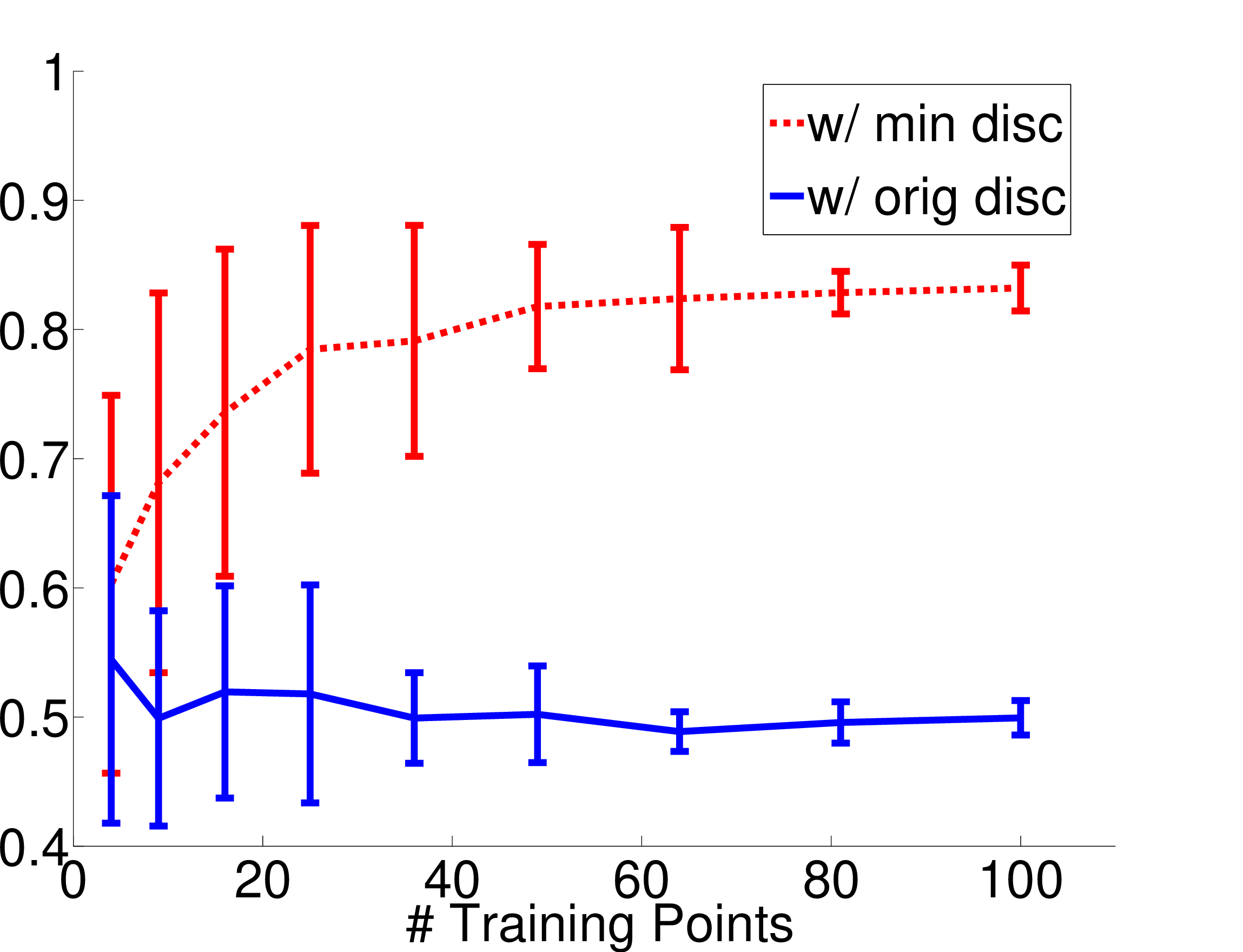}\\
(a) & (b)
\end{tabular}
\end{center}
\vspace{-.5cm}
\caption{Example of application of the discrepancy minimization
  algorithm in dimensions one. (a) Source and target distributions $Q$
  and $P$.  (b) Classification accuracy empirical results plotted as a function
of the number of training points for both the \emph{unweighted case}
(using original empirical distribution $\h Q$) and the \emph{weighted case}
(using distribution $\h Q'$ returned by our discrepancy minimizing algorithm).
The number of unlabeled points used was ten times the number of labeled.
Error bars show $\pm1$ standard deviation.}
\label{fig:ex_illustration}
\vspace{-.5cm}
\end{figure}

Figures~\ref{fig:ex_illustration}(a)-(b) show the empirical advantages
of using the distribution $\h Q'$ returned by the discrepancy
minimizing algorithm described in Proposition \ref{prop:1d_alg} in a
case where source and target distributions are shifted Gaussians: the
source distribution is a Gaussian centered at $-1$ and the target
distribution a Gaussian centered at $+1$, both with standard deviation
2.  The hypothesis set used was the set of half-spaces and the target
function selected to be the interval $[-1,1]$. Thus, training on a
sample drawn form $Q$ generates a separator at $-1$ and errs on about
half of the test points produced by $P$.  In contrast, training with
the distribution $\h Q'$ minimizing the empirical discrepancy yields a
hypothesis separating the points at $+1$, thereby dramatically
reducing the error rate.

\begin{figure}[t]
\begin{center}
\begin{tabular}{cc}
\ipsfig{.11}{figure=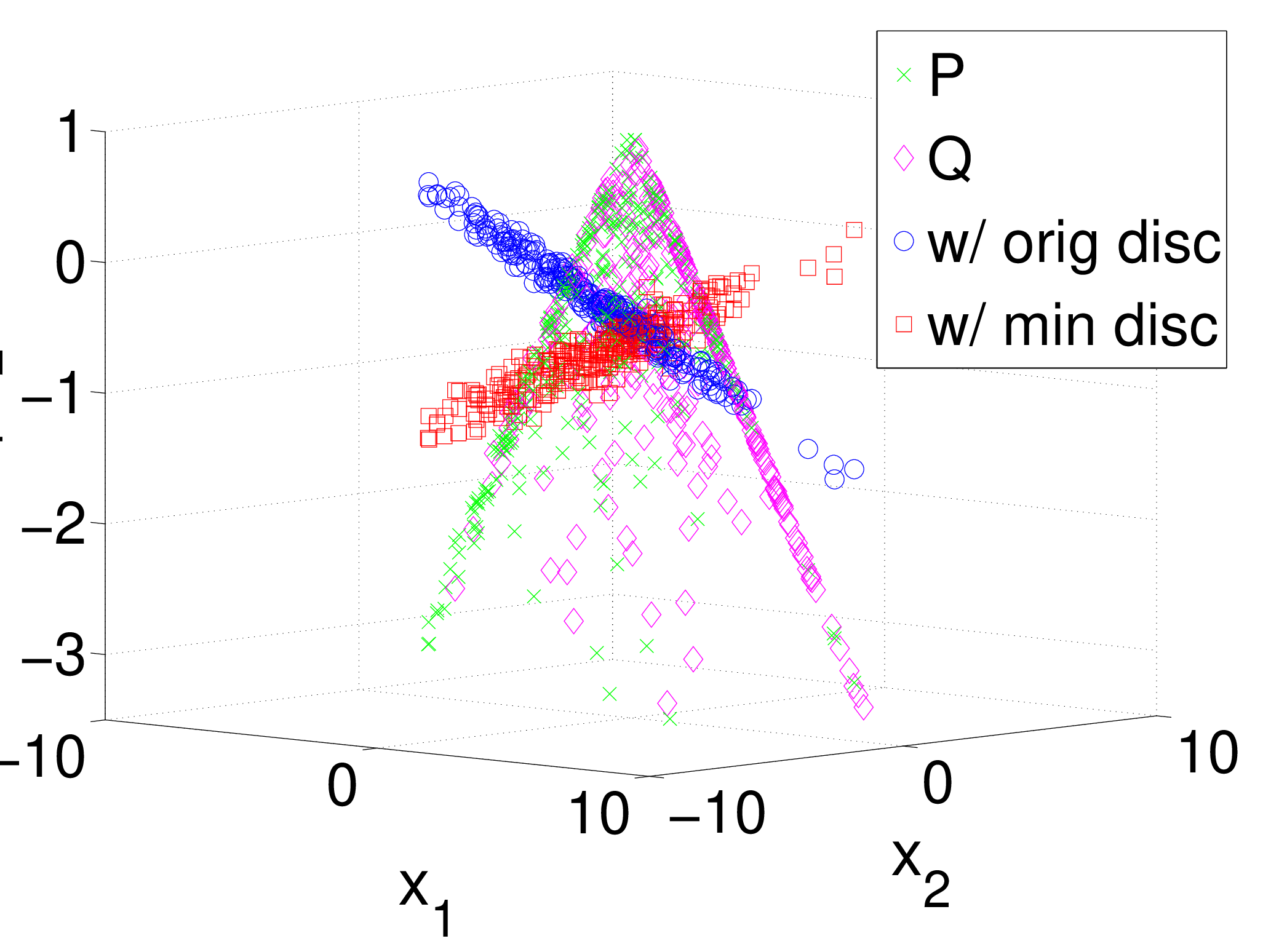}
& \ipsfig{.14}{figure=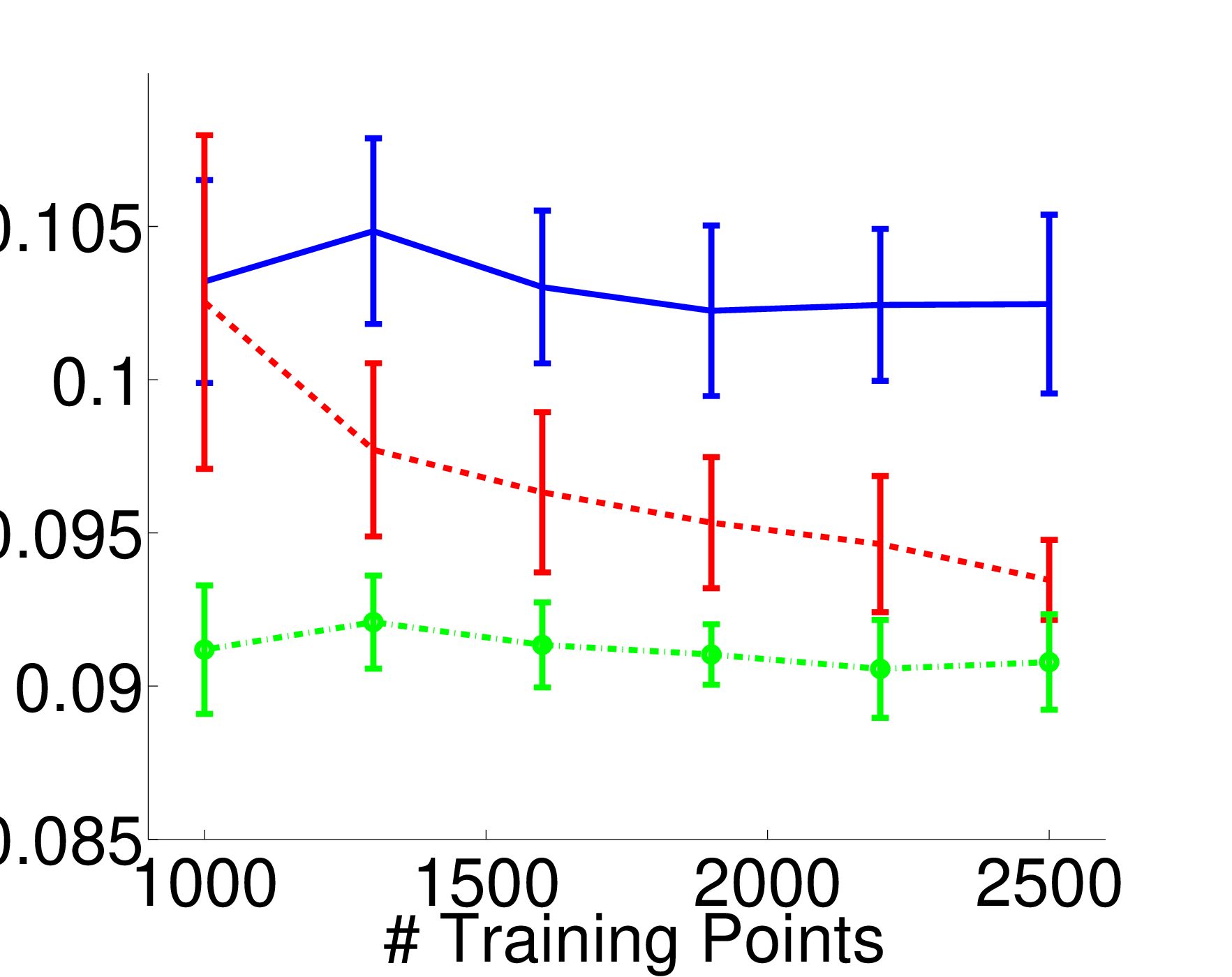}\\
(a) & (b)
\end{tabular}
\end{center}
\vspace{-.5cm}
\caption{(a) An $(x_1,x_2,y)$ plot of $\h Q$ (magenta), $\h P$
(green), weighted (red) and unweighted (blue) hypothesis. (b)
Comparison of mean-squared error for the hypothesis trained on $\h Q$
(top), trained on $\h Q'$ (middle) and on $\h P$ (bottom)
over, a varying number of training points.}
\label{fig:ex_sdp}
\vspace{-.5cm}
\end{figure}

Figures~\ref{fig:ex_sdp}(a)-(b) show the application of the SDP
derived in (\ref{eq:sdp}) to determining the distribution minimizing
the empirical discrepancy for ridge regression.  In
Figure~\ref{fig:ex_sdp}(a), the distributions $Q$ and $P$ are
Gaussians centered at $(\sqrt{2},\sqrt{2})$ and $(-\sqrt{2},
-\sqrt{2})$, both with covariance matrix $2 \I$.  The target function
is $f(x_1, x_2) = (1 - |x_1|) + (1 - |x_2|)$, thus the optimal linear
prediction derived from $Q$ has a negative slope, while the optimal
prediction with respect to the target distribution $P$ in fact has a
positive slope.  Figure~\ref{fig:ex_sdp}(b) shows the performance of
ridge regression when the example is extended to 16-dimensions, before
and after minimizing the discrepancy. In this higher-dimension setting
and even with several thousand points, using ({\small
  http://sedumi.ie.lehigh.edu/}), our SDP problem could be solved in
about 15s using a single 3GHz processor with 2GB RAM. The SDP
algorithm yields distribution weights that decrease the discrepancy
and assist ridge regression in selecting a more appropriate hypothesis
for the target distribution.

\section{Conclusion}
\label{sec:conc}

We presented an extensive theoretical and an algorithmic analysis of
domain adaptation. Our analysis and algorithms are widely applicable
and can benefit a variety of adaptation tasks. More efficient versions
of these algorithms, in some instances efficient approximations,
should further extend the applicability of our techniques to
large-scale adaptation problems.

\appendix

\section{Proof of Theorem~\ref{th:rademacher}}

\begin{proof}
  Let $\Phi(\S)$ be defined by $\Phi(\S) = \sup_{h \in H} R(h) - \h
  R(h)$. Changing a point of $\S$ affects $\Phi(\S)$ by at most
  $1/m$. Thus, by McDiarmid's inequality applied to $\Phi(\S)$, for any
  $\delta > 0$, with probability at least $1 - \frac{\delta}{2}$, the
  following holds for all $h \in H$:
\begin{equation}
\label{eq:mcd1}
\Phi(\S) \leq \E_{\S \sim D}[\Phi(\S)] + \sqrt{\frac{\log \frac{2}{\delta}}{2m}}.
\end{equation}
$\E_{\S \sim D}[\Phi(\S)]$ can be bounded in terms of the empirical
Rade-macher complexity as follows:
\begin{multline*}
\E_{\S}[\Phi(\S)] = \E_\S \big[ \sup_{h \in H} \E_{\S'}[R_{\S'}(h)] - R_\S(h) \big]\\
\begin{aligned}
& = \E_\S \big[ \sup_{h \in H} \E_{\S'}[R_{\S'}(h) - R_\S(h)] \big]\\
& \leq \E_{\S, \S'} \big[ \sup_{h \in H} R_{\S'}(h) - R_\S(h) \big]\\
& = \E_{\S, \S'} \big[ \sup_{h \in H} \frac{1}{m}
\sum_{i = 1}^m (h(x'_i) - h(x_i)) \big]\\
& = \E_{\sigma, \S, \S'} \big[ \sup_{h \in H} \frac{1}{m}
\sum_{i = 1}^m \sigma_i (h(x'_i) - h(x_i)) \big]\\
& \leq \E_{\sigma, \S'} \big[ \sup_{h \in H} \frac{1}{m}
\sum_{i = 1}^m \sigma_i h(x'_i) \big]
+ \E_{\sigma, \S} \big[ \sup_{h \in H} \frac{1}{m}
\sum_{i = 1}^m -\sigma_i h(x_i) \big]\\
& = 2 \E_{\sigma, \S} \big[ \sup_{h \in H} \frac{1}{m}
\sum_{i = 1}^m \sigma_i h(x_i) \big]
\leq 2 \E_{\sigma, \S} \big[ \sup_{h \in H} \big| \frac{1}{m}
\sum_{i = 1}^m \sigma_i h(x_i) \big| \big] \\
\label{eq:27}
& = \R_m(H).
\end{aligned}
\end{multline*}
Changing a point of $\S$ affects $\R_m(H)$ by at most $2/m$. Thus, by
McDiarmid's inequality applied to $\R_m(H)$, with probability at least
$1 - \delta/2$, the following holds:
\begin{equation}
\R_m(H) \leq \h\R_\S(H) + \sqrt{\frac{2 \log \frac{2}{\delta}}{m}}.
\end{equation}
Combining this inequality with Inequality~(\ref{eq:mcd1}) and the
bound on $\E_{\S}[\Phi(\S)]$ above yields directly the statement of
the theorem.
\end{proof}

\section{Rademacher Classification Bound}

\begin{theorem}[Rademacher Classification Bound]
\label{th:rademacher_classification}
Let $H$ be a family of functions mapping $X$ to $\set{0, 1}$ and let
$L_{01}$ denote the 0-1 loss. Let $Q$ be a distribution over
$X$. Then, for any $\delta > 0$, with probability at least $1 -
\delta$, the following inequality holds for all samples $\S$ of size
$m$ drawn according to $Q$:
\begin{equation}
{\L_{01}}_Q(h, h_Q^*) \leq {\L_{01}}_{\h Q}(h, h_Q^*) + \h \R_\S(H)/2 + \sqrt{\frac{\log \frac{1}{\delta}}{2m}}.
\end{equation}
\end{theorem}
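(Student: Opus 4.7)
The plan is to adapt the proof of Theorem~\ref{th:rademacher} from Appendix~A to the composite loss class $L_H = \set{x \mapsto L_{01}(h(x), h_Q^*(x)) \colon h \in H}$, which is a class of $\set{0,1}$-valued (hence $[0,1]$-valued) functions on $X$. A direct invocation of Theorem~\ref{th:rademacher} applied to $L_H$ already yields a bound of the right shape, so the task is really to replace $\h\R_\S(L_H)$ by $\h\R_\S(H)/2$ using the algebraic structure of the $0$-$1$ loss with binary hypotheses, and to tighten the concentration slack to a single $\sqrt{\log(1/\delta)/(2m)}$.

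The key identity, valid for any two $\set{0,1}$-valued functions, is
\[
L_{01}(h(x), h_Q^*(x)) \;=\; h_Q^*(x) + (1 - 2h_Q^*(x))\,h(x),
\]
with $\epsilon(x) := 1 - 2h_Q^*(x) \in \set{-1, +1}$. Substituting into the symmetrized supremum gives
\[
\sup_{h\in H}\sum_{i}\sigma_i L_{01}(h(x_i), h_Q^*(x_i)) \;=\; \sum_{i}\sigma_i h_Q^*(x_i) + \sup_{h\in H}\sum_{i}\bigl(\sigma_i \epsilon(x_i)\bigr)\,h(x_i).
\]
Taking the expectation over $\sigma$, the first term vanishes and, since $\sigma_i\epsilon(x_i)$ is itself Rademacher-distributed, the second reduces to the signed (no-absolute-value) empirical Rademacher complexity of $H$ evaluated on $\S$. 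This signed quantity is in turn bounded above by $\h\R_\S(H)/2$ under the paper's absolute-value, $2/m$-normalized definition, which is exactly where the factor $1/2$ in the claimed bound comes from.

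With this identity in hand, I would run the McDiarmid-plus-symmetrization template of Appendix~A in its one-step version: define $\Phi(\S) = \sup_{h\in H}\bigl[\L_Q(h, h_Q^*) - \L_{\h Q}(h, h_Q^*)\bigr]$, observe that it has bounded differences of $1/m$ since the loss lies in $[0,1]$, and apply McDiarmid once to obtain $\Phi(\S) \le \E_\S[\Phi(\S)] + \sqrt{\log(1/\delta)/(2m)}$ with probability at least $1 - \delta$. The expected value $\E_\S[\Phi(\S)]$ is then controlled by the standard ghost-sample symmetrization argument, whose output is precisely (twice) the expected standard Rademacher complexity of $L_H$; the identity from the previous paragraph replaces this by the standard Rademacher of $H$, producing the $\h\R_\S(H)/2$ term (up to the usual passage between expected and empirical Rademacher).

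The main technical obstacle is the careful bookkeeping that isolates exactly the factor $1/2$: the absorption of $\epsilon(x_i)$ into $\sigma_i$ must happen inside the supremum and before any triangle inequality is invoked, so that the signed Rademacher complexity of $H$ appears, and not, say, the Rademacher complexity of $\set{h - h_Q^* \colon h \in H}$, which would pick up an additional additive slack. A sloppy application of the triangle inequality or of Talagrand's contraction lemma at the wrong step would produce a loose constant of $\h\R_\S(H)$ or worse. The remaining details mirror the computations in Appendix~A.
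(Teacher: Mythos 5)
The paper does not actually prove Theorem~\ref{th:rademacher_classification}: it is quoted from \emcite{bartlett} and stated without proof in the appendix, so your proposal has to be judged on its own terms. Your overall plan (McDiarmid plus ghost-sample symmetrization, then an algebraic identity converting the loss class back into $H$) is the standard one, and the identity $L_{01}(h(x),h_Q^*(x)) = h_Q^*(x) + (1-2h_Q^*(x))\,h(x)$ together with the absorption of $\epsilon(x_i)=1-2h_Q^*(x_i)$ into $\sigma_i$ is correct. The gap is a factor of $2$ at exactly the step you flag as delicate. Symmetrization bounds $\E_\S[\Phi(\S)]$ by $\frac{2}{m}\E_{\sigma,\S}\bigl[\sup_{h}\sum_i \sigma_i L_{01}(h(x_i),h_Q^*(x_i))\bigr]$; your identity turns this into $\frac{2}{m}\E_{\sigma,\S}\bigl[\sup_h\sum_i\sigma_i h(x_i)\bigr]$, and dropping the sign gives at most $\frac{2}{m}\E_{\sigma,\S}\bigl[\sup_h\bigl|\sum_i\sigma_i h(x_i)\bigr|\bigr] = \R_m(H)$ under the paper's $\tfrac{2}{m}$-with-absolute-value normalization --- the same constant as in Theorem~\ref{th:rademacher}, not $\R_m(H)/2$. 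The familiar $1/2$ in the classification bound comes from the identity $1_{h(x)\neq y} = \tfrac{1}{2}(1 - y\,h(x))$ for $\set{-1,+1}$-valued hypotheses, where an explicit coefficient $\tfrac{1}{2}$ multiplies $h$; in your $\set{0,1}$ identity the coefficient of $h$ is $\epsilon(x)\in\set{-1,+1}$ and no such saving occurs. Indeed, for $\set{0,1}$-valued classes and the paper's definition of $\h\R_\S$ the constant $1/2$ cannot be recovered by any argument: taking $H$ to be all $\set{0,1}$-valued functions, $Q$ continuous, and $h$ equal to $h_Q^*$ on the sample and to $1-h_Q^*$ elsewhere gives $\L_{\h Q}(h,h_Q^*)=0$ and $\L_Q(h,h_Q^*)=1$, while $\h\R_\S(H)/2 = \tfrac{1}{2} + O(1/\sqrt{m})$. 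So your (correct) computation is telling you that the bound provable by this route carries $\h\R_\S(H)$, and the stated $\h\R_\S(H)/2$ belongs to the $\set{-1,+1}$-valued formulation.

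A secondary, smaller issue: the single $\sqrt{\log(1/\delta)/(2m)}$ slack is what one obtains with the distribution-dependent $\R_m(H)$; replacing it by the empirical $\h\R_\S(H)$ via the paper's own technique (a second application of McDiarmid's inequality to $\h\R_\S(H)$ plus a union bound) costs an additional $\sqrt{2\log(2/\delta)/m}$ term, exactly as in Theorem~\ref{th:rademacher}. You acknowledge this passage parenthetically but do not account for it, and it cannot be absorbed into the stated bound for free.
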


\section{Discrepancy Minimization with Kernels and $L_2$ loss}

Here, we show how to generalize the results of Section~\ref{sec:l2} to
the high-dimensional case where $H$ is the reproducing kernel Hilbert
space associated to a positive definite symmetric (PDS) kernel $K$.

\begin{proposition}
\label{prop:sdp_kernel}
Let $K$ be a PDS kernel and let $H$ denote the reproducing kernel
Hilbert space associated to $K$. Then, for any $\h Q$ and $\h P$, the
problem of determining the discrepancy minimizing distribution $\h Q'$
for the square loss can be cast an SDP depending only on the Gram
matrix of the kernel function $K$ and solved in time $O(m_0^2 (m_0 +
n_0)^{2.5} + n_0(m_0 + n_0)^2)$.
\end{proposition}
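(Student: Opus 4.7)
The plan is to mimic the linear case of Section~\ref{sec:l2} but with the feature map implicit in the kernel, using the reproducing property and the representer theorem to reduce everything to quantities involving only the Gram matrix.

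First I would fix $u = h' - h$ with $\|u\|_K \leq 2$, so that the inner maximization becomes $\max_{\|u\|_K \leq 2} |\E_{\h P}[u(x)^2] - \E_{\h Q'}[u(x)^2]|$. By the reproducing property, the values of $u$ at the points of $S = \supp(\h Q) \cup \supp(\h P)$ depend only on the projection of $u$ onto $\mathrm{span}\{K(x_i,\cdot) : x_i \in S\}$, and this projection minimizes $\|u\|_K$ among all RKHS elements with those prescribed values. Hence without loss of generality one may write $u = \sum_{i=1}^{m_0+n_0} \alpha_i K(x_i, \cdot)$, so that $u(x_j) = (\K \alpha)_j$ and $\|u\|_K^2 = \alpha^\top \K \alpha$, where $\K$ is the $(m_0+n_0) \times (m_0+n_0)$ Gram matrix.

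Letting $\mat{D}(\z)$ be the diagonal matrix with entries $\h P(x_j) - \h Q'(x_j)$ (affine in $\z$, with $z_i = \h Q'(\s_i)$ supported only on $S_Q$), the inner problem is
\begin{equation*}
\max_{\alpha^\top \K \alpha \le 4} \bigl| \alpha^\top \K \mat{D}(\z) \K \alpha \bigr|.
\end{equation*}
The positive-semidefinite condition $|\alpha^\top \K \mat{D}(\z) \K \alpha| \le \lambda \cdot \tfrac{1}{4} \alpha^\top \K \alpha$ for all $\alpha$ is, after absorbing the factor of $4$ into $\lambda$, equivalent to the pair of linear matrix inequalities $\lambda \K \pm \K \mat{D}(\z) \K \succeq 0$. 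This leads to the SDP
\begin{align*}
\min_{\z, \lambda} &\quad \lambda \\
\text{subject to} &\quad \lambda \K - \K \mat{D}(\z) \K \succeq 0,\\
&\quad \lambda \K + \K \mat{D}(\z) \K \succeq 0,\\
&\quad \1^\top \z = 1 \wedge \z \ge 0,
\end{align*}
which depends on the data only through the Gram matrix $\K$. Convexity follows because $\mat{D}(\z)$, and hence the entire matrix expressions, are affine in $\z$ while the simplex constraint on $\z$ is convex.

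The main technical nuisance is handling a possibly rank-deficient $\K$: one must verify that restricting to $\alpha \in \mathrm{range}(\K)$ suffices (the null space contributes nothing to either $u(x_j)$ or $\|u\|_K$), so the generalized eigenvalue formulation is well posed. Once this is settled, the complexity claim follows from standard interior-point bounds for SDPs \cite{nesterov}: there are $m_0 + 1$ scalar variables, two LMI blocks of size $m_0+n_0$, and the affine parametrization $\mat{D}(\z)$ has a linear dependence on the $m_0$ weights, yielding the stated $O(m_0^2 (m_0+n_0)^{2.5} + n_0 (m_0+n_0)^2)$ running time, in direct analogy with the derivation that led to Proposition~\ref{prop:sdp}.
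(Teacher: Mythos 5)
Your proof is correct, but it reaches the SDP by a genuinely different route than the paper. The paper works with an explicit feature map $\Phi$, writes $\M(\z) = \P \A \P^\top$ with $\P$ the matrix of feature vectors and $\A = \diag(\h P(s_i) - z_i)$, and then proves that $\P\A\P^\top$ has the same non-zero eigenvalues as $\K^{1/2}\A\K^{1/2}$; since $\A$ need not be positive semi-definite, this spectral identity is established by perturbing to $\B = \A + t\I \succ 0$, invoking the fact that $\Y\Y^\top$ and $\Y^\top\Y$ share their characteristic polynomial up to a power of $X$, and extending to $t = 0$ by a polynomial-identity argument. The payoff is an SDP in the same standard form $\lambda\I \pm \M'(\z) \succeq 0$ as Section~\ref{sec:l2}, at the cost of computing $\K^{1/2}$. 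You instead use the reproducing property and the minimum-norm-interpolant argument to restrict $u = h' - h$ to the span of the kernel sections at the sample points, which turns the inner maximization into $\max_{\alpha^\top\K\alpha \le 4} |\alpha^\top \K \mat{D}(\z) \K \alpha|$ and yields the generalized LMIs $\lambda\K \pm \K\mat{D}(\z)\K \succeq 0$. The two SDPs are equivalent (conjugate by $\K^{-1/2}$ on the range of $\K$; on $\ker \K$ both sides of your constraints vanish, which is exactly the degeneracy you flag and correctly dismiss), and your variational route avoids both the explicit feature map and the $t$-perturbation trick, handling rank-deficient $\K$ more transparently. One small point worth stating explicitly if you write this up: the identification of $\set{h' - h \colon \norm{h}_K \le 1, \norm{h'}_K \le 1}$ with the ball $\norm{u}_K \le 2$, which mirrors the step from $\w' - \w$ to $\u$ in the linear case and is what licenses your starting maximization. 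The complexity claim is treated at the same level of detail as in the paper (a citation to general interior-point bounds with $N$ replaced by $m_0 + n_0$), so nothing is lost there.
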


\begin{proof}
  Let $\Phi\colon X \to H$ be a feature mapping associated with $K$.
  Let $p_0 = m_0 + n_0$. Here, we denote by $s_1, \ldots, s_{m_0}$ the
  elements of $S_Q$ and by $s_{m_0 + 1}, \ldots, s_{p_0}$ the
  element of $S_P$.  We also define $z_i = \h Q'(s_i)$ for $i \in [1,
  m_0]$, and for convenience $z_i = 0$ for $i \in [m_0 + 1, m_0 +
  n_0]$.  Then, by Proposition~\ref{prop:sdp}, the problem of finding
  the optimal distribution $\h Q'$ is equivalent to
\begin{equation}
\label{eq:48}
  \min_{\substack{\| \z \|_1 = 1\\ \z \geq 0}} \set{\lambda_{\max}(\M(\z)), \lambda_{\max}(-\M(\z))},
\end{equation}
where $\M(\z) = \sum_{i = 1}^{p_0} (\h P(s_i) - z_i) \Phi(s_i)
\Phi(s_i)^\top$. Let $\P$ denote the matrix in $\Rset^{N \times p_0}$
whose columns are the vectors $\Phi(s_1), \ldots, \Phi(s_{m_0 +
  n_0})$. Then, observe that $\M(\z)$ can be rewritten as
\begin{equation}
\M(\z) = \P \A \P^\top,
\end{equation}
where $\A$ is the diagonal matrix
\begin{equation}
\A = \diag(\h P(s_1) - z_1, \ldots, \h P(s_{p_0}) - z_{p_0}).
\end{equation}
Fix $\z$. There exists $t_0 \in \Rset$ such that, for all
$t \geq t_0$, $\B = \A + t \I$ is a positive
definite symmetric matrix. For any such $t$, let $\N'(\z)$
denote
\begin{equation}
\N'(\z) = \P \B \P^\top.
\end{equation}
Since $\B$ is positive definite, there exists a diagonal matrix
$\B^{1/2} \!\in\! \Rset^{p_0 \times p_0}$ such that $\B = \B^{1/2}
\B^{1/2}$. Thus, we can write $\N'(\z)$ as $\N'(\z) = \Y \Y^\top$ with
$\Y = \P \B^{1/2}$. $\Y \Y^\top$ and $\Y^\top \Y$ have the same
characteristic polynomial modulo multiplication by $X^{N - p_0}$.
Thus, since $\P^\top \P = \K$, the Gram matrix of kernel $K$ for the
sample $S$, $\N'(\z)$ has the same same characteristic polynomial
modulo multiplication by $X^{N - p_0}$ as
\begin{equation}
\N''(\z) = \Y \Y^\top = \B^{1/2} \K \B^{1/2}.
\end{equation}
Now, $\N''(\z)$ can be rewritten as $\N''(\z) = \Z \Z^\top$ with $\Z =
\B^{1/2} \K^{1/2}$. Using the fact that $\Z \Z^\top$ and $\Z^\top \Z$
have the same characteristic polynomial, this shows that $\N'(\z)$ has
the same characteristic polynomial modulo multiplication by $X^{N -
  p_0}$ as
\begin{equation}
\N'''(\z) = \K^{1/2} \B \K^{1/2}.
\end{equation}
Thus, assuming without loss of generality that $N > p_0$, the
following equality between polynomials in $X$ holds for all $t
\geq t_0$:
\begin{equation}
\det(X\I - \P \B \P^\top) = X^{N - p_0}\det(X\I - \K^{1/2} \B \K^{1/2}).
\end{equation}
Both determinants are also polynomials in $t$. Thus, for every fixed
value of $X$, this is an equality between two polynomials in $t$ for
all $t \geq t_0$. Thus, the equality holds for all $t$, in particular
for $t = 0$, which implies that $\M(\z) = \P \A \P^\top$ has the same
non-zero eigenvalues as $\M'(\z) = \K^{1/2} \A \K^{1/2}$.  Thus,
problem (\ref{eq:48}) is equivalent to
\begin{equation}
\label{eq:54}
  \min_{\substack{\| \z \|_1 = 1\\ \z \geq 0}} \set{\lambda_{\max}(\M'(\z)), \lambda_{\max}(-\M'(\z))}.
\end{equation}
Let $\A_0$ denote the diagonal matrix
\begin{equation}
\A_0 = \diag(P(s_1), \ldots, P(s_{p_0})),
\end{equation}
and for $i \in [1, m_0]$, let $\I_i \in \Rset^{p_0 \times p_0}$ denote
the diagonal matrix whose diagonal entries are all zero except from
the $i$th one which equals one. Then,
\begin{equation}
\M'(\z) = \M'_0 - \sum_{i = 1}^{m_0} z_i \M'_i
\end{equation}
with $\M'_0 = \K^{1/2} \A_0 \K^{1/2}$ and $\M'_i = \K^{1/2} \I_i
\K^{1/2}$ for $i \in [1, m_0]$.  Thus, $\M'(\z)$ is an affine
function of $\z$ and problem (\ref{eq:54}) is a convex optimization
problem that can be cast as an SDP, as described in
Section~\ref{sec:l2}, in terms of the Gram matrix $\K$ of the kernel
function $K$.
\end{proof}

\section{Standard Form of SDP Problem}
In this section we explicitly formulate both the \emph{inequality} and
\emph{standard} form of the semidefinite program presented in equation
(\ref{eq:sdp}).
First we write the inequality form:
\begin{align}
\min_{\z, \lambda} & \quad \lambda\\
\text{subject to} 
& \quad \sum_i z_i \M_i - \lambda \I  \preceq \M_0 \\
& \quad \sum_i -z_i \M_i - \lambda \I  \preceq -\M_0 \\
%& \quad \sum_i z_i \leq 1 \\
%& \quad \sum_i -z_i \leq -1 \\
& \quad \sum_i z_i = 1 \\
& \quad -z_i \leq 0, \forall i.
\end{align}
Note that the several linear inequalities can be written
as a single linear matrix inequality (LMI) by using a large block
diagonal matrix. The dual of this problem is then in standard form:
\begin{align}
\max_{\A, \B, \ggamma, \alpha} & \quad -\tr(\A \M_0) + \tr(\B
\M_0) - \alpha \\
\text{subject to} 
& \quad \tr(\A \M_i) - \tr(\B \M_i) + \alpha - \gamma_i = 0, \forall i
\\
& \quad -\tr(\A \I) - \tr(\B \I) = -1\\
& \quad \A \succeq 0, ~\B \succeq 0, ~\ggamma \succeq 0
\end{align}
The standard form problem can be straight-forwardly presented to a
standard solver, such as SeDuMi.  The variable $z_i$ can be retrieved
as the Lagrange multiplier for $i$th equality constraint.

\bibliographystyle{mlapa}
{\small
\bibliography{nadap}
}
\end{document}